\def\eqref#1{equation~\ref{#1}}
\def\1{\bm{1}}
\DeclareMathAlphabet{\mathsfit}{\encodingdefault}{\sfdefault}{m}{sl}
\SetMathAlphabet{\mathsfit}{bold}{\encodingdefault}{\sfdefault}{bx}{n}
\newcommand{\E}{\mathbb{E}}
\newcommand{\Cov}{\mathrm{Cov}}
\newtheorem{assumption}{Assumption}
\title{RiskPO: Risk-based Policy Optimization via Verifiable Reward for LLM Post-Training}
\author{Tao Ren$^{}$\footnotemark[1]\; \footnotemark[3]\;\;
Jinyang Jiang$^{}$\footnotemark[1]\;\;
Hui Yang$^{}$\;\;
Wan Tian$^{}$\;\;
Minhao Zou$^{}$\;\;
Guanghao Li$^{}$\;\;
Zishi Zhang$^{}$\;\; \\
\textbf{
Qinghao Wang$^{}$\;\;
Shentao Qin$^{}$\;\;
Yanjun Zhao$^{}$\;\;
Rui Tao$^{}$\;\;
Hui Shao$^{}$\;\;
Yijie Peng$^{}$\footnotemark[2]
}
\\
\\
Peking University
}
\newtheorem{theorem}{Theorem} 
\newtheorem{lemma}{Lemma} 
\newtheorem{proposition}{Proposition} 
\theoremstyle{remark}
\begin{document}

\renewcommand{\thefootnote}{\fnsymbol{footnote}} 
\footnotetext[1]{\mbox{\ Equal contribution.}}
\footnotetext[2]{\ Corresponding author: \texttt{pengyijie@pku.edu.cn}.}
\footnotetext[3]{\ Email to: \texttt{rtkenny@stu.pku.edu.cn}.}

\maketitle

\begin{abstract}

Reinforcement learning with verifiable reward has recently emerged as a central paradigm for post-training large language models (LLMs); however, prevailing mean-based methods, such as Group Relative Policy Optimization (GRPO), suffer from entropy collapse and limited reasoning gains. We argue that these issues stem from overemphasizing high-probability output sequences while neglecting rare but informative reasoning paths. To address these challenges, we propose Risk-based Policy Optimization (RiskPO), which substitutes classical mean-based objectives with principled risk measures. Specifically, we introduce a Mixed Value-at-Risk objective that integrates weighted attention over multiple regions of the reward distribution, thereby amplifying gradient signals on challenging instances and preventing overconfident convergence. We further design a bundling scheme that aggregates multiple questions into bundles, thus enriching the feedback signal and yielding more stable and informative training dynamics. Theoretically, we prove that the risk-averse update alleviates entropy collapse and promotes exploration. Numerically, RiskPO achieves consistent and significant improvements in mathematical reasoning, multi-modal reasoning, and code generation benchmarks, surpassing GRPO and its variants on both Pass@1 and Pass@k metrics. Our results demonstrate that risk-based optimization provides a rigorous and effective paradigm for enhancing LLM reasoning capabilities. Our implementation is available at \footnotesize{\url{https://github.com/RTkenny/RiskPO}.}

\end{abstract}

\section{Introduction}

\begin{wrapfigure}{r}{0.44\linewidth}
    \vspace{-0.45cm}
    \centering
    \includegraphics[trim=0.1cm 0.5cm 0.1cm 0.3cm, width=\linewidth]{./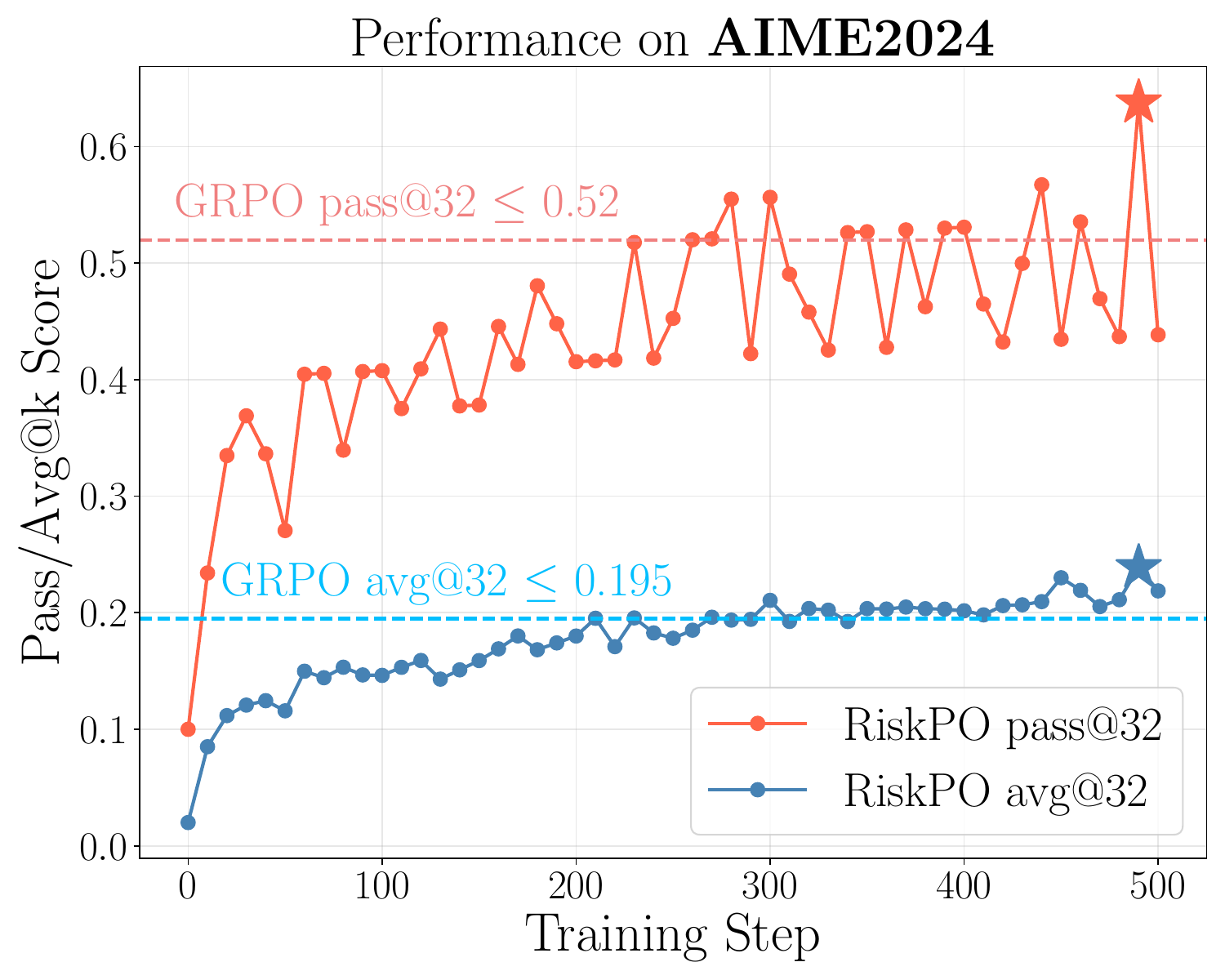}
    \caption{Pass@32 and Avg@32 learning curves of DeepSeek-R1-Distill-Qwen-1.5B trained by RiskPO on AIME2024.}
    \label{fig:aime_teaser}
    \vspace{-0.3cm}
\end{wrapfigure}
Since reinforcement learning (RL) provides a unified framework that flexibly accommodates diverse training targets and feedback, it has become a key technique for the post-training of large language models (LLMs). Based on such a foundation, RL with verifiable reward (RLVR) has recently been recognized as an effective paradigm for enhancing the reasoning ability of LLMs. Unlike traditional RL from human feedback, it leverages objective and binary reward signals, providing clear optimization feedback. Maximizing the expected average reward is anticipated to improve task performance of LLMs. Within this framework, a series of efficiency-oriented extensions have been developed from the classical policy-based RL method. Among them, Group Relative Policy Optimization \citep[GRPO]{shao2024deepseekmath,guo2025deepseeknature} achieves substantial efficiency gains by discarding redundant structures originally designed for standard RL tasks, and has become the de facto baseline in this area. Since then, several GRPO variants have been proposed; see Section \ref{sec:2} for details.

However, RLVR methods that maximize average performance suffer from the fundamental issue of entropy collapse. Prior work shows that models trained via RLVR often experience rapid entropy collapse in the early stages of training, leading to premature convergence and a plateau in performance with little subsequent improvement \citep{cui2025entropy,gao2025one}. Entropy, as emphasized by several studies, serves as a key indicator of exploration capacity in RL~\citep{wang20258020rulehighentropyminority, cheng2025reasoningexplorationentropyperspective, hou2025t1advancinglanguagemodel}. Once entropy collapses, the model becomes overconfident, reduces exploration prematurely, and fails to acquire new knowledge effectively. This constrained exploration ultimately limits its reasoning capabilities and overall performance. As a consequence, LLMs do not truly expand their intrinsic reasoning capacity or boundary; the observed improvements often reflect a more efficient sampling of known answers rather than genuinely stronger reasoning skills \citep{yue2025does, xiong2025minimalist, chen2025pass, gao2025one}. This boundary effect implies that GRPO may only enhance short-horizon performance metrics (e.g., Pass@1) without significantly lifting the capability of the base model.

\begin{wrapfigure}{r}{0.476\linewidth}
    \vspace{-0.4cm}
    \centering
    \includegraphics[trim=0.1cm 0.5cm 0.1cm 0.3cm, width=\linewidth]{./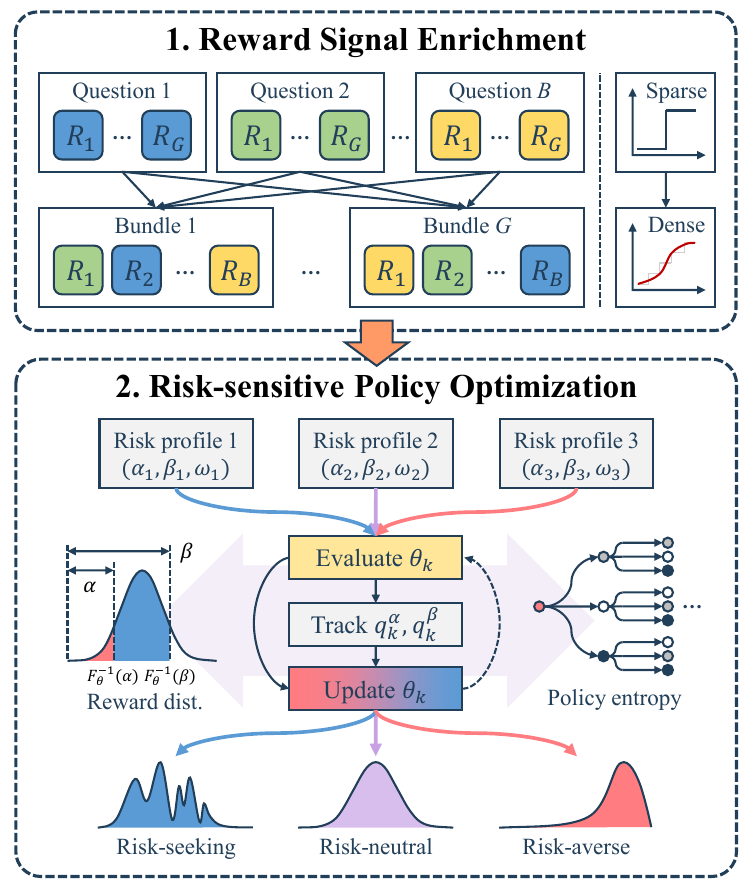}
    \caption{The framework of RiskPO.}
    \label{fig:teaser}
    \vspace{-0.3cm}
\end{wrapfigure}

We argue that {\color{black} a key reason behind these challenges is that GRPO employs the mean as its objective, which is inherently misaligned with the goal of improving reasoning ability. A mean-based objective disproportionately emphasizes common, high-probability generation paths while neglecting rare yet informative reasoning trajectories, leading to premature convergence and limited exploration. Even worse, if the model consistently generates either all wrong answers for a question, the estimated GRPO's advantage collapses to zero, leaving the model without any learning signal on its weakest areas. This overexploitation of gradients on easier questions yields marginal performance gains, as optimization predominantly reinforces knowledge the model already possesses rather than guiding it toward solving more challenging problems.}
In contrast, risk-averse optimization objectives, such as Conditional Value-at-Risk (CVaR) or Range Value-at-Risk (RVaR), can encourage the model to explore difficult problems and enhance reasoning abilities. By amplifying gradient signals from low-reward answers, these objectives naturally encourage the policy to reduce overconfidence, diversify its search, and promote novel reasoning strategies. Consequently, Risk-based objectives provide an effective handle for better mitigating entropy collapse, preventing overfitting to easy problems, and driving genuine improvements of the reasoning boundary.

We propose Risk-based Policy Optimization (RiskPO), which employs a novel risk-sensitive objective termed Mixed Value-at-Risk (MVaR). Compared with mean-based post-training methods, our risk-based approach demonstrates superior performance in encouraging exploration and fostering stronger reasoning capabilities. The overall framework of RiskPO is illustrated in Figure \ref{fig:teaser}. We summarize our contributions as follows:
\begin{enumerate}[leftmargin=20pt]
    \item To the best of our knowledge, we are the first to incorporate risk measures into the training objective. Since the reward for a single question is binary, we propose grouping multiple questions into a bundle to enrich the feedback signal. It is shown to avoid the zero advantage issue and strengthen gradient signals for hard problems, thereby facilitating exploration.
    \item We provide theoretical results that explain the superiority of the proposed MVaR objective. By analyzing the entropy mechanism, we demonstrate that the risk-averse configuration can effectively mitigate entropy collapse.
    \item We conduct extensive numerical experiments to evaluate the performance of our algorithm. RiskPO consistently outperforms GRPO and other baselines on multiple mathematical reasoning tasks. On Pass@k metrics, RiskPO even achieves better performance, indicating its strong capacity for exploration and acquisition of new reasoning skills.
\end{enumerate}

\section{Related Works}\label{sec:2}

\textbf{RL for LLM Post-training.}
RL has played a critical role in the post-training phase of LLM \citep{shao2024deepseekmath, christiano2017deepRLHP, lambert2022illustratingRLHF}. Through verifiable reward, the LLM learn complicated reasoning skills across solving math problems and coding \citep{zhao2025absolutezero, chen2024spin, huang2025rzero}. Originating from the PPO, much literature has proposed new methods to cater to the requirements of RLVR. ReMax \citep{li2024remax} proposes to use the deterministic output of the LLM as the baseline to reduce variance. DAPO \citep{yu2025dapo} incorporates four engineering tricks to improve GRPO. VAPO shows that the value-based RL method can also perform well in RLVR \citep{yue2025vapo}. GPG \citep{chu2025gpg} investigates the normalizing factor in GRPO. Several literatures \citep{wang20258020rulehighentropyminority, cui2025entropy, cheng2025reasoningexplorationentropyperspective, wang2025stabilizingarcher} investigate the entropy mechanism in RLVR, pointing out the significance of exploration in RLVR. GSPO \citep{zheng2025gspo} and GMPO \citep{zhao2025geometricmeanpolicyoptimization} focus on stabilizing the RL training. GSPO uses sequence-wise importance sampling, which has good performance on Mixture-of-Expert models. GMPO uses the geometric mean in the gradient estimation, which decreases the importance. ProRL \citep{liu2025prorl} shows that with stabilized training, the performance gain would have a log-scale relationship with the training time. The above methods mainly rely on engineering tricks rather than investigating fundamental dynamics.

\textbf{Risk-sensitive RL.}
Risk-sensitive RL \citep[see, e.g.,][]{ren2024riskminer, petersen2019DSRrisky} seeks to shape the entire reward distribution rather than merely optimizing its mean. \cite{chow2015riskMDP} investigates the Markov Decision Process (MDP) under CVaR objective and proposes a dynamic-programming based solution. \cite{la2013actorspsa} and \cite{prashanth2016cptspsa} use finite difference to optimize risk measure under the MDP setting. \cite{dabney2018QRDQN,dabney2018implicit_disRL} introduces state-action value distribution approximation techniques to improve the effectiveness, which is referred to as distributional RL. CVaRPG \citep{tamar2015optimizingCVaR} and QPO \citep{jiang2022quantile} design policy gradient style algorithm to optimize CVaR and quantile, respectively. \cite{jiang2024distortion} considers a more general case, optimizing the distortion risk measure in a policy gradient manner. There is also literature considering risk level as a constraint. \cite{bertsekas1997nonlinear} use a Lagrangian approach to solve RL problems. \cite{borkar2014riskMDP} use CVaR as a constraint, and \cite{chow2018risk} develop actor-critic algorithms under quantile and CVaR constraint.

\section{Rethinking RLVR from a Distributional Perspective}
We formalize the post-training problem of RLVR as follows. Given an input problem $x$ sampled from a dataset $\mathcal{D}$, an LLM parameterized as $\pi_{\theta}$ generates a response $y \sim \pi_{\theta}(\cdot|x)$. A rule-based verifier $R(\cdot)$ then evaluates the correctness of the response, returning one if $y$ is correct and zero otherwise. Notably, no intermediate process-level feedback is provided. The standard objective in this setting is to maximize the expected reward: $\mathcal{J}(\theta) =\mathbb{E}_{x \sim \mathcal{D}, \; y \sim \pi_{\theta}(\cdot \mid x)}[R(y)]$. With a score-function method, its gradient is given by $\nabla_\theta\mathcal{J}(\theta) =\mathbb{E}[R(y)\nabla_\theta\ln \pi_\theta(y|x)]$, resulting in a standard RL framework, where a baseline or so-called value model is used for variance reduction.

As a widely adopted baseline for RLVR, GRPO~\citep{shao2024deepseekmath} replaces the value model with sequence-level standardized rewards computed within a group of responses. We denote by $y_{<t}$ the partial response consisting of the first $t$ tokens, i.e., $\pi_\theta(y|x)=\prod_t \tilde{\pi}_\theta(y_t|x,y_{<t})$.
Specifically, given a query $x$ and a group of $G$ responses $\{y_i\}_{i=1}^G$ sampled from a reference model $\pi_{\theta'}(\cdot|x)$, the GRPO objective is defined as
\begin{align*}
    \mathcal{J}_{\text{GRPO}}(\theta) = 
    \mathbb{E}_{\substack{{x \sim \mathcal{D}}\\{\{y_i\}_{i=1}^G \sim \pi_{\theta'}(\cdot|x)} }}\bigg[
        \frac{1}{G}\sum_{i=1}^G \frac{1}{|y_i|}
        \sum_{t=1}^{|y_i|}
        \min\!\left(
            w_{i,t}(\theta) \, \hat{A}_i,\;
            \text{clip}(w_{i,t}(\theta), 1-\epsilon, 1+\epsilon)\,\hat{A}_i
        \right)
    \bigg],
\end{align*}
where $\hat{A}_i = \frac{R(y_i) - \frac{1}{G}\sum_{j=1}^G R(y_j)}{\text{Std}(\{R(y_j)\}_{j=1}^G)}$ denotes the standardized feedback, while 
$w_{i,t}(\theta) = \frac{\tilde{\pi}_\theta(y_{i,t}\mid x,y_{i,<t})}{\tilde{\pi}_{\theta'}(y_{i,t}\mid x,y_{i,<t})}$ is the importance sampling ratio that enables multiple parameter updates per group of generated data. Despite these modifications, GRPO remains fundamentally a method for optimizing the mean performance of LLMs.
Since the reward provided by the verifier is an indirect objective, we argue it may not be the best practice for RLVR to optimize its expectation.
Instead, we propose to adopt a distributional perspective. The most challenging problems correspond to the left tail of the reward distribution. These samples represent the questions that the model has not yet mastered. Such hard cases often lead to gradient vanishing in GRPO. For example, when all responses are incorrect, the computed advantage collapses to zero, which provides no meaningful training signal. As a result, the model fails to improve on its weakest regions of the distribution.

Therefore, beyond optimizing the expectation, we claim that it is more beneficial to consider the distributional structure of performance, particularly the lower tail. Incorporating risk measures, such as CVaR or RVaR, into the training objective emphasizes hard problems in the tail of the distribution and provides a finer-grained and more robust learning signal for RLVR.

\section{Mastering the Uncertainty via Risk-based Policy Optimization}

{\color{black}In this section, we introduce our risk-based objective for RLVR and the associated post-training methodology, establishing a principled framework for enhancing LLM reasoning ability.}

Denote the RLVR reward signal distribution by $F_\theta(\cdot)$, where the parameter $\theta$ reflects the stochasticity induced by the LLM $\pi_\theta(\cdot|x)$. RVaR is defined to capture the average performance within a specified quantile interval of the distribution.
Let $F_\theta^{-1}(\alpha)$ be the $\alpha$-level quantile of $R(y)$. Then, for $0 \leq \alpha < \beta \leq 1$, RVaR on the interval $[\alpha,\beta]$ is written as
\begin{align}
    \mathcal{J}_{\text{RVaR}_{\alpha:\beta}}(\theta):=\mathbb{E}\big[R(y)|R(y)\in[F^{-1}_\theta(\alpha), F^{-1}_\theta(\beta)] \big]=\frac{1}{\beta-\alpha}\int_{F^{-1}_\theta(\alpha)}^{F^{-1}_\theta(\beta)}  rdF_\theta(r), \label{eq:rvar}
\end{align}
that is, the conditional expectation of $R(y)$ given that it falls between its $\alpha$- and $\beta$-quantiles. 
To optimize the RVaR through gradient descent algorithms, we first derive the gradient of RVaR as shown in Theorem \ref{theo:rvar_grad}. The proofs of all theoretical results in this work can be found in Appendix \ref{Theoretical Details}.

\begin{theorem}\label{theo:rvar_grad}
Assume $F_\theta(r)$ is continuously differentiable with respect to both the parameter $\theta$ and the variable $r$; the density is positive at the quantiles, i.e., $f_\theta(F^{-1}_\theta(\alpha)) > 0$ and $f_\theta(F^{-1}_\theta(\beta)) > 0$; and that the differentiation under the integral sign is justified. Then the gradient of RVaR is given by
\begin{align*}
    \nabla_\theta \mathcal{J}_{\mathrm{RVaR}_{\alpha:\beta}}(\theta) = \frac{1}{\beta-\alpha} \mathbb{E}\big[g\big( R(y), F^{-1}_\theta(\alpha), F^{-1}_\theta(\beta)\big) \nabla_{\theta}\ln \pi_\theta(y|x) \big],
\end{align*}
where $g(z,a,b)=(z-a)^+ - (z-b)^+ + a-b$, and $(z)^+=\max\{z,0\}$.
\end{theorem}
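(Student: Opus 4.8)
The plan is to differentiate the integral form $\mathcal{J}_{\mathrm{RVaR}_{\alpha:\beta}}(\theta)=\frac{1}{\beta-\alpha}\int_{a}^{b} r\,f_\theta(r)\,dr$ directly, writing $a=a(\theta):=F^{-1}_\theta(\alpha)$ and $b=b(\theta):=F^{-1}_\theta(\beta)$ and tracking the $\theta$-dependence that enters both through the density $f_\theta$ and through the two quantile endpoints. Before differentiating I would record the two ingredients that the hypotheses are tailored for. First, the \emph{quantile-sensitivity} relation: differentiating the defining identity $F_\theta(a(\theta))=\alpha$ and using $f_\theta(a(\theta))>0$ gives $f_\theta(a)\,\nabla_\theta a = -\,\nabla_\theta F_\theta(r)\big|_{r=a}$, and identically $f_\theta(b)\,\nabla_\theta b=-\,\nabla_\theta F_\theta(r)\big|_{r=b}$. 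Second, the \emph{score-function identity}: since $F_\theta(r)=\mathbb{E}[\mathbf{1}\{R(y)\le r\}]$, the assumed validity of differentiation under the integral sign yields $\nabla_\theta F_\theta(r)=\mathbb{E}[\mathbf{1}\{R(y)\le r\}\,\nabla_\theta\ln\pi_\theta(y|x)]$; write $G(r)$ for this vector-valued function.

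Applying the Leibniz rule to the integral with variable endpoints produces three terms: an interior term $\int_a^b r\,\nabla_\theta f_\theta(r)\,dr$ coming from the density, plus the endpoint contributions $b\,f_\theta(b)\,\nabla_\theta b - a\,f_\theta(a)\,\nabla_\theta a$. Substituting the quantile-sensitivity relations turns the endpoint contributions into $-\,b\,G(b)+a\,G(a)$. For the interior term I would use $\nabla_\theta f_\theta(r)=\partial_r\nabla_\theta F_\theta(r)=G'(r)$ (an exchange of mixed partials justified by continuous differentiability in both arguments) and integrate by parts, obtaining $\int_a^b r\,G'(r)\,dr = b\,G(b)-a\,G(a)-\int_a^b G(r)\,dr$. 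The boundary pieces $b\,G(b)-a\,G(a)$ then cancel exactly against the endpoint contributions, leaving the clean expression $\nabla_\theta\mathcal{J}_{\mathrm{RVaR}_{\alpha:\beta}}(\theta) = -\frac{1}{\beta-\alpha}\int_a^b G(r)\,dr$.

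It remains to recognize this as the claimed expectation. Inserting the score-function form of $G$ and interchanging the $r$-integral with the expectation by Fubini gives $\nabla_\theta\mathcal{J}=-\frac{1}{\beta-\alpha}\mathbb{E}\big[\big(\int_a^b\mathbf{1}\{R(y)\le r\}\,dr\big)\nabla_\theta\ln\pi_\theta(y|x)\big]$. For a fixed value $z=R(y)$ the inner integral equals $b-a$ when $z\le a$, equals $b-z$ when $a<z\le b$, and vanishes when $z>b$; a short case check shows this is exactly $-g(z,a,b)$ with $g(z,a,b)=(z-a)^+-(z-b)^+ +a-b$ (equivalently, using $(x)^+=x+(-x)^+$ one sees $g(z,a,b)=(a-z)^+-(b-z)^+$, whose cases match directly). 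The two minus signs cancel and yield the stated formula with $a=F^{-1}_\theta(\alpha)$, $b=F^{-1}_\theta(\beta)$. As an independent cross-check I would verify the result via the envelope theorem: decompose $\int_\alpha^\beta F^{-1}_\theta(u)\,du$ into lower partial moments, represent each by a Rockafellar--Uryasev variational problem whose optimizer is the corresponding quantile, and differentiate using the envelope property so that the quantile's $\theta$-dependence may be ignored; this reproduces $(a-z)^+-(b-z)^+$ immediately.

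The main obstacle is analytic rather than algebraic: rigorously justifying the two interchanges of differentiation and integration with the expectation (the score-function identity and the Fubini step), since the integrand involves the discontinuous indicator $\mathbf{1}\{R(y)\le r\}$ and the moving quantile endpoints. The hypotheses---continuous differentiability of $F_\theta$ in both arguments, strict positivity of the density at the quantiles, and the assumed validity of differentiating under the integral sign---are precisely what make the quantile-sensitivity step and the interchanges legitimate; I would invoke them at each interchange and note that the cancellation of boundary terms is what keeps the final expression free of the troublesome density factors $f_\theta(a)$ and $f_\theta(b)$, which would otherwise be awkward to estimate in practice.
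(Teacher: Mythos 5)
Your proposal is correct and follows essentially the same route as the paper: Leibniz's rule for the variable-endpoint integral, the implicit-function-theorem formula for the quantile sensitivity $\nabla_\theta F^{-1}_\theta(z) = -\nabla_\theta F_\theta(r)|_{r=F^{-1}_\theta(z)}/f_\theta(F^{-1}_\theta(z))$, and the score-function representation of $\nabla_\theta F_\theta(r)$. The only (cosmetic) difference is that you treat the interior term by writing $\nabla_\theta f_\theta(r)$ as the $r$-derivative of $\nabla_\theta F_\theta(r)$ and integrating by parts so the boundary terms cancel before the Fubini step, whereas the paper applies the score-function identity directly to $\int r\,\nabla_\theta f_\theta(r)\,dr$ and merges the boundary contributions into $g$ at the end; both yield the identical expression.
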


Note that when $\alpha = 0$, RVaR coincides with the lower-tail CVaR at level $\beta$, 
and the gradient in Theorem \ref{theo:rvar_grad} reduces to 
$\nabla_\theta \mathcal{J}_{\text{RVaR}_{0:\beta}}(\theta)=\beta^{-1}\mathbb{E}\big[-(F^{-1}_\theta(\beta)- R(y))^+\nabla_\theta \ln \pi_\theta(y|x) \big]$. 
Since RVaR effectively places a window for control on the reward distribution, it is natural to further combine several such segments to better shape the overall distribution. Accordingly, we introduce a new objective into RLVR, namely Mixed Value-at-Risk (MVaR), which integrates metrics over multiple distributional segments as follows:
\begin{align}\label{eq:mvar}
    \mathcal{J}&_{\text{MVaR}^{\omega}_{\alpha:\beta}}(\theta)= \bigg\{(1+\omega)\int_{F^{-1}_\theta(0)}^{F^{-1}_\theta(\alpha)}  +  \int_{F^{-1}_\theta(\alpha)}^{F^{-1}_\theta(\beta)} \bigg\}\ rdF_\theta(r),
\end{align}
where $\omega \ge 0$ controls the emphasis placed on tail samples during optimization, and high-performance samples are excluded from the current training process.
Note that $\mathcal{J}_{\text{MVaR}^{\omega}_{\alpha:\beta}}(\theta) = (1+\omega) \alpha \mathcal{J}_{\text{RVaR}_{0:\alpha}}(\theta) + (\beta-\alpha) \mathcal{J}_{\text{RVaR}_{\alpha:\beta}}(\theta)$. The gradient of (\ref{eq:mvar}) can be derived by Theorem \ref{theo:rvar_grad}.

However, the distributional information from a single question $x$ is very limited, since the feedback is binary and offers only coarse signals. To obtain a richer source of information, we propose to group several questions into a bundle, i.e., $X:=\{x_i\}_{i=1}^{B}\sim\mathcal{D}^{\otimes B}$, and calculate the advantage based on the sum of the individual question scores within the bundle. This aggregation transforms sparse binary feedback into a more informative distribution over bundle scores, enabling finer distinctions between different levels of performance and avoiding zero gradient on difficult questions. We then focus on optimizing the MVaR of the bundle score:
\begin{align*}
    \mathbb{E}_{X\sim\mathcal{D}^{\otimes B},\{y^i\sim\pi_{\theta}(\cdot|x_i)\}_{i=1}^{B}}\left[R_B \big((1+\omega)\bm{1}_{\{R_B\le F_\theta^{-1}(\alpha)\}}+\bm{1}_{\{F_\theta^{-1}(\alpha) < R_B \le F_\theta^{-1}(\beta)\}}\big)\right],
\end{align*}
where $R_B = \sum_{i=1}^{B}R(y^i)$ denotes the bundle score. For each \(i\in\{1,\dots,B\}\), we sample \(Y_i:=\{y^i_j\}_{j=1}^G\) with
\(y^i_j\sim \pi_\theta(\cdot| x_i)\) i.i.d., and define
\(Y:=\{Y_i\}_{i=1}^B\). Then we can generate $G$ bundles without overlaps from the $G\times B$ responses of $B$ questions. The gradient can be calculated by
\begin{align}
    \mathbb{E}_{X\sim\mathcal{D}^{\otimes B},\{y^i_j\}_{j=1}^{G}\sim\pi_\theta(\cdot|x_i),\xi_i\sim \mathrm{Unif}(\mathfrak S_G)}\bigg[\frac{1}{G}\sum_{j=1}^G A_{j}\frac{1}{B}\sum_{i=1}^B\nabla_\theta \ln \pi_\theta (y^i_{\xi_{i,j}}| x_i)\bigg],\nonumber
\end{align}
where $A_{j} = -(1+\omega)(F_{\theta}^{-1}(\alpha)-R_{B_j})^++g(R_{B_j},F_{\theta}^{-1}(\alpha),F_{\theta}^{-1}(\beta))$ is the bundle-wise advantage under MVaR objective,
$R_{B_j}=\sum_{i=1}^{B}R(y^i_{\xi_{i,j}})$ is the bundle-wise score, $\xi$ is a permutation of $\{1,\dots,G\}$ that independently draw \(\xi_i\sim \mathrm{Unif}(\mathfrak S_G)\) for every \(i\), \(\mathfrak S_G\) is the symmetric group on \(G\) element, and $\xi_{i,j}$ is the $j$-th elements in the permutation. This construction yields \(G\) disjoint bundles:
the \(j\)-th bundle uses \(\{y^i_{\xi_{i,j}}\}_{i=1}^B\), so that for each fixed \(i\), \(\{y^i_{\xi_{i,1}},\dots,y^i_{\xi_{i,G}}\}\) is a permutation of \(\{y^i_1,\dots,y^i_G\}\), i.e., every answer is used only once (without replacement).

To ensure stable improvement \citep{schulman2017proximal, schulman2015trust} with multiple updates per bundle-wise MVaR objective evaluation, we adopt a trust-region style update with clipping and sequence-level importance sampling~\citep{zheng2025gspo}. Since the reward in RLVR is only available at the sequence level, i.e., $y^{i}$, it is natural to define importance weights also at the sequence (response) level and then aggregate them into the bundle objective.
Formally, given $B$ problems $X=\{x_i\}_{i=1}^B$ and $G$ responses per problem
$Y_i = \{y^i_j\}_{j=1}^G$, we independently draw $\xi_i \sim \mathrm{Unif}(\mathfrak S_G)$ for each $i$, yielding $G$ bundles: $\mathcal{P}_{j} = \{y^i_{\xi_{i,j}}\}_{i=1}^B, j=1,\dots,G,$ where every responses is used without replication. We then construct the clipped MVaR objective at the bundle level, which constitutes the final loss for backpropagation:
\begin{align}\label{mvar_obj}
    \mathcal{J}_{\mathrm{MVaR}}^{\mathrm{clip}}(\theta)
    = \mathbb{E}_{X,Y,\{\xi_i\}}
    \bigg[
        \frac{1}{G}\sum_{j=1}^G \frac{1}{B}\sum_{i=1}^{B}
        \min\!\Big(
            s^i_j(\theta)\, A^{(j)},
            \;\text{clip}(s^i_j(\theta),1-\epsilon,1+\epsilon)\,A^{(j)}
        \Big)
    \bigg],
\end{align}
where $s^i_j(\theta)= \big(\frac{\pi_\theta(y^i_{\xi_{i,j}}\mid x_i)}{\pi_{\theta'}(y^i_{\xi_{i,j}}\mid x_i)}\big)^{1/|y^i_{\xi_{i,j}}|}$ is the sequence-wise importance sampling ratio.

Every token within the same bundle shares the same MVaR-based advantage $A^{(j)}$, ensuring that optimization is aligned with the unit of reward (the bundle score) and directs training toward the left tail of the performance distribution. We track $F_\theta^{-1}(\alpha)$ and $F_\theta^{-1}(\beta)$ in an online manner. After substituting the tracked quantiles into the advantage and deriving the gradient, we update model parameters accordingly. Therefore, RiskPO can be implemented as a two-timescale stochastic approximation algorithm. The pseudocode of the proposed algorithm is provided in Algorithm \ref{alg:1}.

\begin{algorithm}
\caption{Risk-based Policy Optimization}
\label{alg:1}
\begin{algorithmic}[1]
\STATE \textbf{Input}: quantile levels $\alpha, \beta$, weight $\omega$, policy $\pi_{\cdot}$, learning rates $\{\gamma_k\}, \{\eta_k\}$, and iterations $K$
\STATE \textbf{Initialize}: policy parameter $\theta_0$, and quantile trackers $q^\alpha_0$, $q^\beta_0$
\FOR{$k = 1, \cdots, K$}
    \STATE Sample $B$ questions, $X=\{x_i\}_{i=1}^{B}$, from the dataset $\mathcal{D}$
    \STATE Generate $G$ responses for each question, $\{y^i_j\}_{j=1}^{G}\sim \pi(\cdot\vert x_i)$ and evaluate the reward  $R(y^i_j)$
    \STATE Sample from the symmetric group for $B$ times, $\xi_i\sim \mathrm{Unif}(\mathfrak S_G)$, yielding $G$ papers
    \STATE Track quantiles with batched papers' scores: $q^\alpha_{k+1} = q^\alpha_k + \gamma_k ( \alpha - \frac{1}{G}\sum_{j=1}^{G}\mathbf{1}{\{R_{B_j} < q^\alpha_k\}} )$, $q^\beta_{k+1} = q^\beta_k + \gamma_k ( \beta - \frac{1}{G}\sum_{j=1}^{G}\mathbf{1}{\{R_{B_j} < q^\beta_k\}} )$
    \STATE Backpropagate the clipped MVaR objective (\ref{mvar_obj}) to compute $\nabla_\theta \mathcal{J}^{\operatorname{clip}}_{\text{MVaR}}(\theta)$
    \STATE Update policy parameter: $\theta_{k+1} = \theta_k + \eta_k\  \nabla_\theta \mathcal{J}^{\operatorname{clip}}_{\text{MVaR}}(\theta)$
\ENDFOR
\STATE \textbf{Output}: Final policy parameter $\theta_{K+1}$
\end{algorithmic}
\end{algorithm}

\section{Entropy Mechanism for Risk-sensitive Objective}\label{sec entropy}

{\color{black}GRPO suffers from entropy collapse, where the policy entropy rapidly decreases during training. This premature reduction in entropy limits exploration of alternative reasoning paths, thereby constraining performance improvement and reducing the likelihood of discovering correct solutions.}
In this section, we conduct a per-step analysis for the change of policy entropy in the optimization and give a theoretical guarantee that our RVaR policy gradient can mitigate the entropy collapse issue. 

Following the standard framework in policy-gradient literature \cite[see, e.g.,][]{agarwal2021theory,shani2020adaptiveconverge,abbasi2019politexregret}, we conduct theoretical analysis under a tabular softmax formulation with deterministic sequence-level rewards. Specifically, we consider an input set $\mathcal{X}$ and an output set $\mathcal{Y}$. The actor is parameterized by a matrix $\theta\in\mathbb{R}^{\vert \mathcal{X}\vert \times \vert \mathcal{Y}\vert}$, where each entry $\theta_{x,y}$ is the logit for choosing $y$ given $x$. The policy is thus $\pi_\theta(y| x)=\frac{\exp(z_{x,y})}{\sum_{u\in\mathcal Y_x}\exp(z_{x,u})}$, and its conditional entropy is $\mathcal H(\pi_\theta| x)=-\sum_{y}\pi_\theta(y| x)\log \pi_\theta(y| x)$. Let $A_{\theta}(x,y)$ be the advantage value associated with the chosen algorithm. We begin with the following proposition, which links entropy dynamics to the covariance between the advantage and the log-probability of the output.

\begin{proposition}\label{prop:seq_entropy_change}
Fix a prompt \(x\) and a finite set of complete sequences \(\mathcal{Y}_x\). Consider a natural-gradient step, i.e., $\theta_{k+1} = \theta_k + \eta \Delta_k$, where $\Delta_{k,x,y} =A_{\theta_k}(x,y)$ otherwise zero,
then
\begin{align}\label{eq:cov_NPG}
\mathcal{H}\!\left(\pi_{\theta_{k+1}}| x\right)-\mathcal{H}\!\left(\pi_{\theta_{k}}| x\right)
\;=\;
-\eta\,\operatorname{Cov}_{y\sim \pi_{\theta_{k}}(\cdot| x)}
\!\big(\log \pi_{\theta_{k}}(y| x),\, A_{\theta_{k}}(x,y)\big)
\;+\; O\!\left(\|\Delta_{k}\|^2\right).
\end{align}
\end{proposition}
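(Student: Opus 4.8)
The plan is to treat the conditional entropy $\mathcal{H}(\pi_\theta| x)$ as a smooth function of the logit row $\{\theta_{x,y}\}_{y\in\mathcal{Y}_x}$ and perform a first-order Taylor expansion along the update direction $\Delta_k$. Because $\pi_\theta(\cdot| x)$ depends only on the logits associated with the prompt $x$, and $\Delta_k$ is supported precisely on that row, only the $x$-row enters the computation. Writing $p_u := \pi_{\theta_k}(u| x)$ and $z_u := \theta_{k,x,u}$, Taylor's theorem gives
\begin{align*}
\mathcal{H}(\pi_{\theta_{k+1}}| x) - \mathcal{H}(\pi_{\theta_k}| x) = \eta\sum_u \Delta_{k,x,u}\,\frac{\partial \mathcal{H}(\pi_{\theta_k}| x)}{\partial z_u} + O(\|\Delta_k\|^2),
\end{align*}
where the remainder is genuinely second order in the step because the softmax entropy has a uniformly bounded Hessian in the logits. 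So the whole statement reduces to evaluating the first-order term and matching it to a covariance.

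The key computation is the entropy gradient in logit coordinates. Starting from $\log p_y = z_y - \log\sum_u e^{z_u}$, one obtains $\partial \log p_y/\partial z_u = \delta_{yu}-p_u$. When differentiating $\mathcal{H} = -\sum_y p_y\log p_y$, the contribution of the $p_y\,\partial\log p_y/\partial z_u$ piece vanishes since $\sum_y p_y(\delta_{yu}-p_u)=0$, leaving
\begin{align*}
\frac{\partial \mathcal{H}(\pi_{\theta_k}| x)}{\partial z_u} = -\sum_y p_y(\delta_{yu}-p_u)\log p_y = -p_u\big(\log p_u + \mathcal{H}(\pi_{\theta_k}| x)\big).
\end{align*}

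Finally I would substitute the natural-gradient direction $\Delta_{k,x,u}=A_{\theta_k}(x,u)$ and collect terms, so that the first-order contribution becomes
\begin{align*}
-\eta\sum_u p_u\,A_{\theta_k}(x,u)\big(\log p_u + \mathcal{H}\big) = -\eta\big(\mathbb{E}[A\log p] + \mathcal{H}\,\mathbb{E}[A]\big),
\end{align*}
with expectations taken over $y\sim\pi_{\theta_k}(\cdot| x)$. Using the identity $\mathcal{H}(\pi_{\theta_k}| x) = -\mathbb{E}[\log p_y]$, the parenthesized expression is exactly $\mathbb{E}[A\log p]-\mathbb{E}[\log p]\,\mathbb{E}[A]=\operatorname{Cov}_{y\sim\pi_{\theta_k}(\cdot| x)}(\log\pi_{\theta_k}(y| x),\,A_{\theta_k}(x,y))$, which yields the claimed identity. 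The only step requiring care is the justification of the $O(\|\Delta_k\|^2)$ remainder, but since entropy is a fixed smooth function of the logits with globally bounded second derivatives this is immediate; I therefore expect no substantive obstacle, the proof being essentially the entropy-gradient calculation followed by recognizing the covariance.
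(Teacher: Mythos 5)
Your proposal is correct and follows essentially the same route as the paper's proof: a first-order Taylor expansion of the entropy in the logits with a bounded-Hessian remainder, the softmax identity $\partial\log\pi_\theta(y|x)/\partial\theta_{x,u}=\mathbf{1}_{\{y=u\}}-\pi_\theta(u|x)$, and the observation that the resulting first-order term is exactly $-\eta\,\operatorname{Cov}(\log\pi_{\theta_k}(y|x),A_{\theta_k}(x,y))$. The only cosmetic difference is that you differentiate $\mathcal{H}$ directly in logit coordinates, whereas the paper first writes $\nabla_\theta\mathcal{H}=-\mathbb{E}[\log\pi_\theta\,\nabla_\theta\log\pi_\theta]$ via the score-function identity and then takes the inner product with $\Delta_k$; the computations are equivalent.
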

{\color{black} The proposition indicates that the correlation between the advantage and the log probability of the output affects entropy changes: a positive correlation leads to entropy decrease, and vice versa. High advantage and high log probability suggest that the model is very confident about samples with high advantage. Therefore, over-optimizing already well-learned problems accelerates entropy collapse, reducing the model’s opportunities for trial and error on difficult problems and ultimately limiting policy performance. RiskPO, which uses MVaR as its objective, can alleviate this issue. It focuses more on difficult problems, i.e., samples from the left tail of the reward distribution, and clips the gradient signal for well-learned problems. In the following theorems, we provide theoretical justifications, showing through the comparison of advantage–log-likelihood correlation that RiskPO constitutes a relatively high-entropy update scheme. Before that, we present the following assumption, which captures the most intuitive scenario but is not the only suitable one. Let $\psi(r)= \mathbb{E}[\log \pi_\theta(y| x) | R = r]$ denote the conditional log-likelihood.}

\begin{assumption}\label{tail_mono_assump} 
{\color{black}The conditional log-probability of output $\psi(r)$ is non-decreasing for $r \geq F^{-1}_\theta(\beta)$, non-increasing for $r \leq F^{-1}_\theta(\alpha)$, and $\psi(F^{-1}_\theta(\alpha)), \psi(F^{-1}_\theta(\beta))  \geq \mathbb{E}[\log\pi_\theta(y|x)]$.}
\end{assumption}

\begin{wrapfigure}{r}{0.46\linewidth}
    \vspace{-0.4cm}
    \centering
    \includegraphics[trim=0.5cm 0.7cm 0cm 0.3cm, width=\linewidth]{./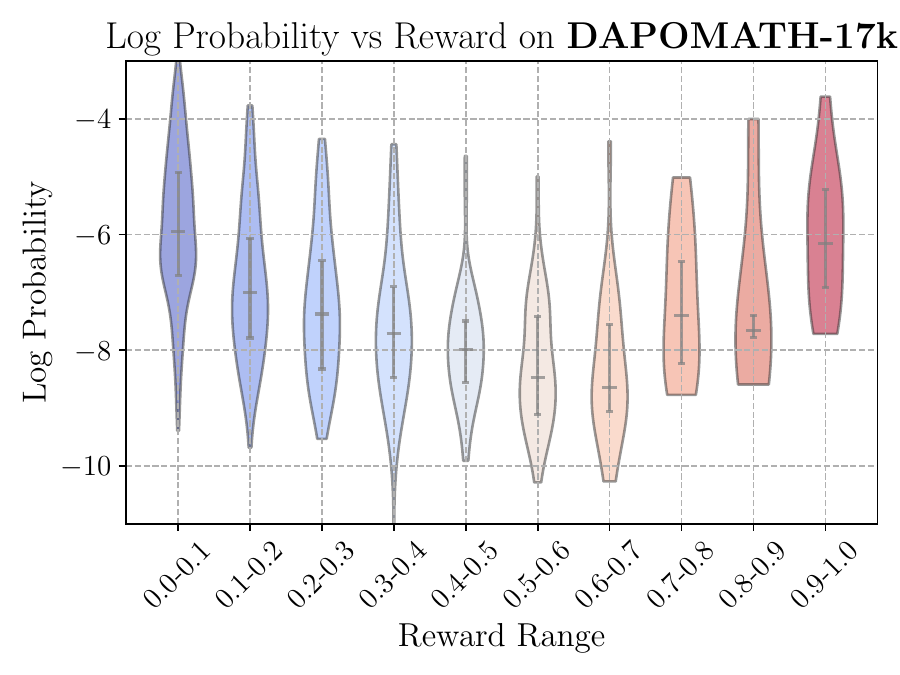}
    \caption{Log-probabilities as a function of reward quantile levels for DeepSeek-R1-Distill-Qwen-1.5B on DAPOMATH-17K.}
    \label{fig:dapomath_logprob}
    \vspace{-0.4cm}
\end{wrapfigure}
Intuitively, this assumption captures the behavior of a pre-trained base model. {\color{black}For relatively easier problems in the upper tail of the reward distribution}, the model exhibits well-calibrated confidence, 
assigning higher probabilities to correct answers (upper-tail monotonicity). 
In contrast, for harder problems in the lower tail, the model often fails systematically: it allocates substantial probability mass to incorrect outputs, effectively making confident but consistently wrong predictions (lower-tail monotonicity). 
We empirically validate this assumption using DeepSeek-R1-Distill-Qwen-1.5B on the training set. For each question, the model generates 16 responses and computes the mean reward across these responses. Recall that the {\color{black}length-normalized} sequence-level log-probability is defined as $\log \pi_\theta(y\vert x)=\vert y \vert^{-1}\sum^{\vert y \vert}_{t=1}\log\tilde\pi_\theta(y_t\vert y_{<t},x)$. 
As shown in Figure \ref{fig:dapomath_logprob}, the sequence-wise logprob exhibits monotonicity in $[0,0.3]$ and $[0.7,1]$, which justifies our assumption. Denote the advantages in the associated algorithms as $A_{\mathrm{MVaR}}$ and $A_{\mathrm{Mean}}$, see Appendix X for details. Next, we present a comparison of the correlations between the advantage values and the log-likelihood.

\begin{theorem}\label{theo:advantage_correlation}
If Assumption \ref{tail_mono_assump} holds and $E[|\log\pi(y|x)|]<\infty$, then the covariance between MVaR-based advantages and output log-probabilities is smaller than that of mean-based methods, i.e.,
\begin{align}
\mathrm{Cov}_{y\sim\pi_\theta(\cdot|x)}(\log \pi_\theta(y|x),A_{\mathrm{MVaR}_{\alpha:\beta}^\omega}) \le \mathrm{Cov}_{y\sim\pi_\theta(\cdot|x)}(\log \pi_\theta(y|x),A_{\mathrm{Mean}}).
\end{align}
\end{theorem}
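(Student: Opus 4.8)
The plan is to reduce the claimed inequality to the sign of a single covariance of reward-measurable quantities and then exploit the tail structure of Assumption \ref{tail_mono_assump}. Write $\bar L := \mathbb{E}[\log\pi_\theta(y|x)]$ and recall $\psi(r) = \mathbb{E}[\log\pi_\theta(y|x)\mid R=r]$. Since both advantages are functions of $R(y)$ alone, conditioning on the reward and using the tower property gives $\mathrm{Cov}(\log\pi_\theta(y|x), A(R)) = \mathbb{E}_R[(\psi(R)-\bar L)\,A(R)]$ for either $A \in \{A_{\mathrm{MVaR}_{\alpha:\beta}^\omega}, A_{\mathrm{Mean}}\}$. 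Setting $\phi(r) := \psi(r)-\bar L$, the tower property also yields $\mathbb{E}_R[\phi(R)] = 0$. By bilinearity of covariance, the target inequality is therefore equivalent to showing $\mathbb{E}_R[\phi(R)\,D(R)] \le 0$, where $D := A_{\mathrm{MVaR}_{\alpha:\beta}^\omega} - A_{\mathrm{Mean}}$.

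Next I make both advantages explicit as functions of the reward. Using Theorem \ref{theo:rvar_grad} together with the decomposition $\mathcal{J}_{\mathrm{MVaR}^\omega_{\alpha:\beta}} = (1+\omega)\alpha\,\mathcal{J}_{\mathrm{RVaR}_{0:\alpha}} + (\beta-\alpha)\mathcal{J}_{\mathrm{RVaR}_{\alpha:\beta}}$, the MVaR gradient carries the integrand coefficient $A_{\mathrm{MVaR}_{\alpha:\beta}^\omega}(r) = -(1+\omega)(a-r)^+ + g(r,a,b)$, with $a := F_\theta^{-1}(\alpha)$ and $b := F_\theta^{-1}(\beta)$. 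Expanding the rectified terms case by case produces the piecewise form $A_{\mathrm{MVaR}}(r) = (1+\omega)r - \omega a - b$ for $r \le a$, $A_{\mathrm{MVaR}}(r) = r-b$ for $a < r \le b$, and $A_{\mathrm{MVaR}}(r)=0$ for $r>b$ (the vanishing upper tail reflects the exclusion of high-performance samples). With $\mu := \mathbb{E}[R(y)]$ and $A_{\mathrm{Mean}}(r)=r-\mu$, subtraction gives $D(r)=\omega(r-a)+(\mu-b)$ on $r\le a$, $D(r)=\mu-b$ on $a<r\le b$, and $D(r)=\mu-r$ on $r>b$.

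The crucial step is a constant shift justified by $\mathbb{E}_R[\phi]=0$: since $\mathbb{E}_R[\phi(R)\,c]=0$ for any constant $c$, I may replace $D$ by $\tilde D := D - (\mu-b)$ without changing the expectation. Because the middle value of $D$ is exactly $\mu-b$, the shifted function $\tilde D$ vanishes on $[a,b]$, equals $\omega(r-a)\le 0$ on $r\le a$, and equals $b-r<0$ on $r>b$; hence $\tilde D \le 0$ everywhere and is supported on the two tails. Assumption \ref{tail_mono_assump} then controls the sign of $\phi$ on precisely those tails: lower-tail monotonicity of $\psi$ on $r \le a$ together with $\psi(a)\ge\bar L$ forces $\phi(r)\ge\phi(a)\ge 0$ for $r\le a$, while upper-tail monotonicity on $r\ge b$ with $\psi(b)\ge\bar L$ forces $\phi(r)\ge\phi(b)\ge 0$ for $r\ge b$. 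Thus the integrand $\phi(r)\tilde D(r)\le 0$ pointwise, and integrating against $dF_\theta$ yields $\mathbb{E}_R[\phi(R)D(R)] = \mathbb{E}_R[\phi(R)\tilde D(R)] \le 0$, which is exactly the desired covariance comparison.

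I expect two points to require the most care. The first is deriving the three-regime expression for $A_{\mathrm{MVaR}}$ correctly from Theorem \ref{theo:rvar_grad}, since a slip in the case analysis (especially the vanishing of the coefficient above the $\beta$-quantile) would invalidate the shift argument. The second, and the genuine crux, is recognizing the shift by the middle value $\mu-b$: this is what converts $D$---which a priori changes sign---into the everywhere-nonpositive $\tilde D$ whose support coincides exactly with the region where Assumption \ref{tail_mono_assump} pins down the sign of $\phi$; once this is seen, the remainder is a pointwise sign check under the integral. A minor but necessary convention is that the mean coefficient is taken with unit slope, $A_{\mathrm{Mean}}(r)=r-\mu$, so that $D$ is flat on $[a,b]$; a different positive scaling would destroy the constant middle regime and the clean shift would have to be re-derived.
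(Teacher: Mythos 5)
Your proof is correct and reaches the inequality by a genuinely different route than the paper. The paper's argument is threshold-based: it first establishes a layer-cake identity $\operatorname{Cov}(X,Y)=\int\operatorname{Cov}(\mathbf 1_{\{X>t\}},Y)\,dt$ (their Lemma~\ref{lem:cov-layer-cake}), expresses both covariances as weighted integrals of $k(t):=\operatorname{Cov}(\mathbf 1_{\{R(y)>t\}},\log\pi_\theta(y|x))$ over the ranges $(-\infty,F_\theta^{-1}(\alpha)]$ and $(F_\theta^{-1}(\alpha),F_\theta^{-1}(\beta)]$, computes $k'(t)=-f_\theta(t)\bigl(\psi(t)-\mathbb{E}[\log\pi_\theta(y|x)]\bigr)$, and combines tail monotonicity with $k(\pm\infty)=0$ to get $k\le 0$ on the lower tail and $k\ge 0$ on the upper tail, so that up-weighting the lower range by $1+\omega$ and discarding the upper range can only decrease the covariance. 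You instead condition on the reward to write both covariances as $\mathbb{E}[\phi(R)A(R)]$ with $\phi=\psi-\bar L$ and $\mathbb{E}[\phi(R)]=0$, derive the explicit piecewise-linear form of $A_{\mathrm{MVaR}}$ (your three regimes, including the vanishing above $F_\theta^{-1}(\beta)$, are algebraically correct), and shift the difference $D$ by its constant middle value $\mu-b$ so that $\tilde D\le 0$ is supported exactly where Assumption~\ref{tail_mono_assump} pins $\phi\ge 0$; the conclusion is then a pointwise sign check. The two arguments are essentially Fubini-duals of each other (e.g.\ $\int_b^\infty k(t)\,dt=\mathbb{E}[(R-b)^+\phi(R)]$ is exactly your upper-tail term), but they buy different things: the paper's threshold formulation reuses $k(t)$ to prove the more general comparison of monotone reward transformations in their supplementary Theorem~\ref{sup th}, whereas yours is more elementary --- it needs no layer-cake lemma, no limits of $k$ at $\pm\infty$ via dominated convergence, and no density of $R$ (a genuine advantage here, since the bundle score is discrete and the paper's differentiation of $k$ is only heuristic in that case) --- and it exposes the structure of the MVaR advantage explicitly. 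Your closing caveat that the flat middle regime of $D$ relies on the unit-slope convention $A_{\mathrm{Mean}}(r)=r-\mu$ is well taken and consistent with the paper's own definition in the appendix (a variance-normalized GRPO advantage would require redoing the shift).
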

{\color{black}The combination of Proposition 1 and Theorem 2 implies that, under the same update setting, RiskPO leads to higher output entropy compared with mean-based methods. Similarly, we can conclude that risk-seeking objectives assign greater weight to high-reward outcomes, thereby amplifying the covariance between log-probabilities and advantages. This stronger coupling causes entropy to decrease more rapidly and results in severe entropy collapse.
To further elucidate the relationship between correlation and objective design, beyond the specified MVaR objective, we can generalize to a more general transformation of the reward or so-called advantage value. This general form allows us to assess the strength of the correlation and, in turn, reason about the resulting policy entropy.
Please refer to Appendix \ref{sec:sup th} for details.}

\section{Experiments}
To comprehensively evaluate the effectiveness of \textsc{RiskPO}, this section conducts systematic experiments across a broad spectrum of tasks, including mathematical reasoning, code generation, and multi-modal reasoning. We benchmark on more than ten datasets that span different levels of difficulty. Through these experiments, our goals are threefold: (i) to verify that risk-based objectives consistently outperform mean-based methods across domains, (ii) to demonstrate that the proposed distributional perspective leads to tangible improvements on the hardest reasoning problems, and (iii) to provide empirical evidence that RiskPO mitigates entropy collapse and enables genuine expansion of reasoning capability rather than merely improving sampling efficiency. Detailed experimental settings and supplementary results are provided in Appendix \ref{Experiment Details}.

\subsection{Main Results}

\begin{table}[h]
\vspace{-0.1cm}
\centering
\small
\caption{Pass@1 performance on hard-level mathematical reasoning benchmarks.}
\vspace{-0.2cm}
\label{tab:math-pass1}
\setlength{\tabcolsep}{4pt}
\renewcommand{\arraystretch}{1.}
\begin{tabular}{lccccccc}
\toprule
\textbf{Model} & \textbf{AIME25} & \textbf{AIME24} & \textbf{AMC} & \textbf{MATH500} & \textbf{Minerva} & \textbf{Oly.} & \textbf{Avg.} \\
\midrule
Qwen2.5-Math-1.5B  & 6.6 & 10.0 & 43.4 & 61.8 & 15.1 & 28.4 & 27.55 \\
Qwen2.5-Math-1.5B-Instruct  & 10.0 & 10.0 & 48.2 & 64.2 & 26.5 & 35.2 & 32.35 \\
DeepSeek-R1-Distill-Qwen-1.5B  & 13.3 & 13.3 & 32.5 & 59.8 & 20.3 & 30.5 & 28.28 \\
\midrule
Dr.GRPO-1.5B \citep{liu2025understanding}& 20.0 & 23.3 & 54.7 & 77.4 & 26.3 & 38.1 & 39.97 \\
GRPO-1.5B \citep{shao2024deepseekmath} & 20.0 & 20.0 & 56.6 & 79.2 & 27.1 & 39.6 & 40.41 \\
GPG-1.5B \citep{chu2025gpg} & 16.6 & 20.0 & 55.7 & 74.5 & 28.8 & 37.6 & 38.90 \\
DAPO-1.5B \citep{yu2025dapo} & 30.0 & 26.6 & 58.6 & 78.2 & 29.2 & 40.6 & 43.87 \\
GMPO-1.5B \citep{zhao2025geometricmeanpolicyoptimization} & 23.3 & 23.3 & 54.2 & 76.2 & 29.2 & 39.2 & 40.90\\
\rowcolor{blue!8} RiskPO-1.5B (Ours) & \textbf{33.3} & \textbf{33.3} & \textbf{60.8} & \textbf{81.8} & \textbf{29.5} & \textbf{41.2} & \textbf{46.65} \\
\bottomrule
\end{tabular}
\end{table}

Table~\ref{tab:math-pass1} reports Pass@1 accuracy across six hard-level mathematical reasoning benchmarks. We observe that RiskPO consistently achieves the best performance among all methods, outperforming both the base models and recent GRPO variants. In particular, RiskPO attains an average score of $46.65$, representing a $+2.78$ absolute improvement over the strongest baseline DAPO ($43.87$) and a $+6.24$ improvement over vanilla GRPO ($40.41$). The gains are especially pronounced on the most challenging AIME datasets, where RiskPO surpasses DAPO by nearly $+6.7$ points ($33.3$ vs. $26.6$). These results demonstrate that emphasizing distributional risk through our MVaR objective substantially improves reasoning ability, not only enhancing performance on easier datasets like AMC and MATH500 but also pushing the frontier on the hardest Olympiad-style tasks. 

We complement these findings in hard-level math tasks with results on easier mathematical reasoning, multi-modal reasoning, and code generation tasks in Table~\ref{tab:pass1-other}. While performance gaps on GSM8K are naturally small due to the dataset’s simplicity, \textsc{RiskPO} maintains a measurable advantage on MATH and yields consistent improvements on both LiveCodeBench and Geometry3K. Together, these results underscore the broad applicability of risk-sensitive objectives and their ability to enhance reasoning capacity across diverse domains.

\begin{table}[h]
\centering
\small
\vspace{-0.1cm}
\caption{Pass@1 results on easy-level math benchmarks and multi-modal/coding benchmarks.}
\label{tab:pass1-other}
\vspace{-0.2cm}
\setlength{\tabcolsep}{8pt}
\renewcommand{\arraystretch}{1.}
\begin{tabular}{l|ccc|ccc}
\toprule 
\multirow{2}{*}{\textbf{Method}} & \multicolumn{3}{c|}{\textbf{Easy-level math. reasoning}} & \multicolumn{3}{c}{\textbf{Multi-modal \& coding}} \\
& \textbf{MATH} & \textbf{GSM8K} & \textbf{Avg.} & \textbf{LCB} & \textbf{Geo3K} & \textbf{Avg.} \\
\midrule
GRPO  & 54.3 & 78.8 & 66.55 & 25.8 & 53.7 & 39.75 \\
DAPO  & 55.2 & 80.3 & 67.75 & 26.2 & 54.3 & 40.25 \\
\rowcolor{blue!8} RiskPO (Ours) & \textbf{56.2} & \textbf{80.3} & \textbf{68.25} & \textbf{26.8} & \textbf{54.5} & \textbf{40.65} \\
\bottomrule
\end{tabular}
\vspace{-0.0cm}
\end{table}

We argue that the RiskPO is expanding the reasoning boundary of the base model. To support the claim, we present the evolution of Pass@1, Pass@8, and Pass@16 on AMC and MATH500 during the training process in Figure~\ref{fig:pass1_16}. A clear widening gap emerges as $k$ increases, with \textsc{RiskPO} steadily surpassing GRPO across all evaluation points. This pattern indicates that the model is not merely improving its sampling efficiency on problems it could already solve—such as turning a “one success in sixteen attempts” case into a reliable single-shot success—but is in fact acquiring genuinely new solution strategies. Notably, \textsc{RiskPO} is able to solve instances that remain persistently unsolved under GRPO, even after sixteen attempts, all within the same computational budget. These results substantiate our claim that \textsc{RiskPO} expands the reasoning boundary of the base model, enabling progress beyond the capabilities achievable by conventional mean-based objectives.

\vspace{-0.15cm}
\begin{figure}[h]
\centering
\subfloat[Pass@1 on AMC.]{
\includegraphics[trim=0cm 0.5cm 0cm 0cm, width=0.3\textwidth]{./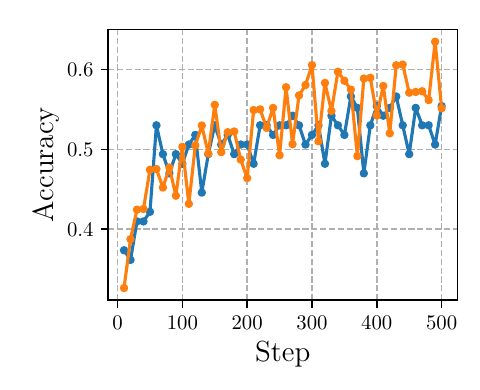}}
\subfloat[Pass@8 on AMC.]{
\includegraphics[trim=0cm 0.5cm 0cm 0cm, width=0.3\textwidth]{./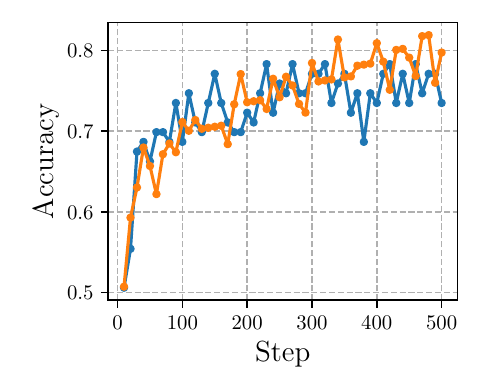}}
\subfloat[Pass@16 on AMC.]{
\includegraphics[trim=0cm 0.5cm 0cm 0cm, width=0.3\textwidth]{./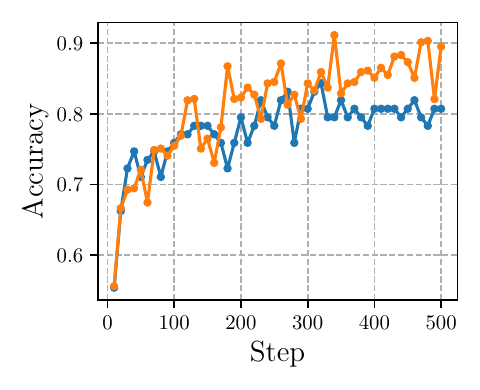}}

\subfloat[Pass@1 on MATH500.]{
\includegraphics[trim=0cm 0.5cm 0cm 0cm, width=0.3\textwidth]{./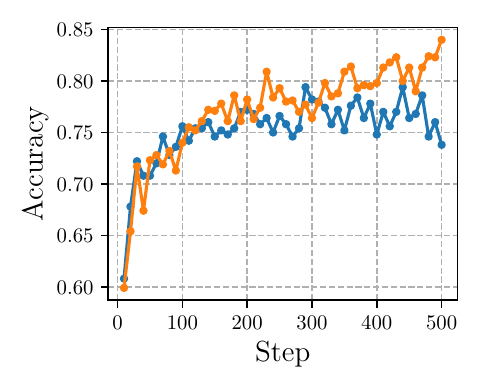}}
\subfloat[Pass@8 on MATH500.]{
\includegraphics[trim=0cm 0.5cm 0cm 0cm, width=0.3\textwidth]{./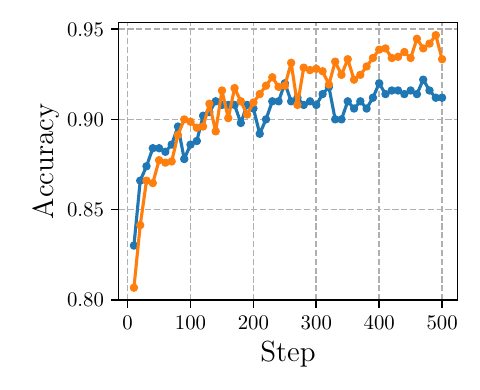}}
\subfloat[Pass@16 on MATH500.]{
\includegraphics[trim=0cm 0.5cm 0cm 0cm, width=0.3\textwidth]{./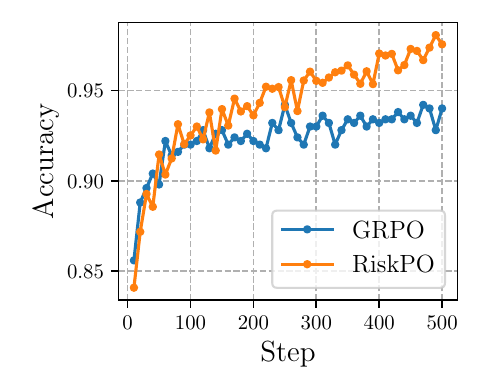}}
\vspace{-0.2cm}
\caption{Pass@k learning curves on the AMC and MATH500 datasets.}
\vspace{-0.3cm}
\label{fig:pass1_16}
\end{figure}

\begin{figure}[htbp]
    \begin{subfigure}[t]{0.245\linewidth}
        \centering
        \includegraphics[trim=0cm 0.5cm 0cm 0cm, width=\linewidth]{./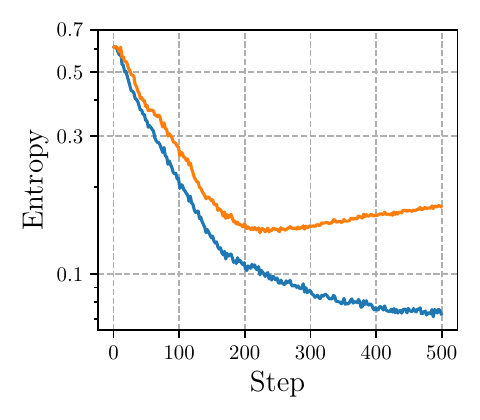}
        \caption{Entropy.}
    \end{subfigure}
    \begin{subfigure}[t]{0.245\linewidth}
        \centering
        \includegraphics[trim=0cm 0.5cm 0cm 0cm, width=\linewidth]{./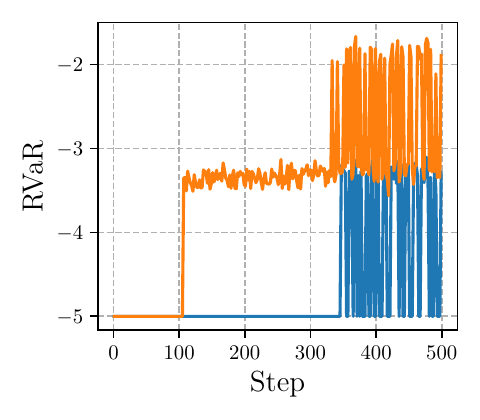}
        \caption{{\color{black}Lower-tail RVaR.}}
    \end{subfigure}
    \begin{subfigure}[t]{0.245\linewidth}
        \centering
        \includegraphics[trim=0cm 0.5cm 0cm 0cm, width=\linewidth]{./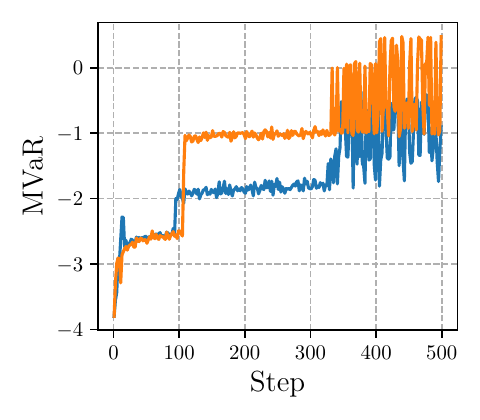}
        \caption{{\color{black}MVaR reward.}}
    \end{subfigure}
    \begin{subfigure}[t]{0.245\linewidth}
        \centering
        \includegraphics[trim=0cm 0.5cm 0cm 0cm, width=\linewidth]{./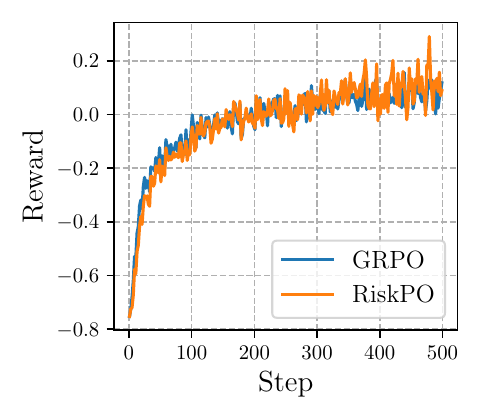}
        \caption{Mean reward.}
    \end{subfigure}
    \vspace{-0.2cm}
    \caption{Learning curves on DAPOMATH-17K, the RiskPO mitigates the entropy collapse and shows better performance on difficult problems, which is indicated by risk measures.}
    \label{fig:training_curves}
    \vspace{-0.15cm}
\end{figure}

We investigate how the training dynamics of the RiskPO differ from GRPO, and Figure~\ref{fig:training_curves} depicts the trajectories of different objectives and entropy during training on the DAPOMATH-17K dataset. We contend that the mean reward is {\color{black}an inadequate training objective}. The mean reward learning curves of GRPO and RiskPO are almost indistinguishable, with RiskPO exhibiting slightly greater fluctuations—likely a consequence of its inherently higher-entropy behavior. In contrast, the lower-tail {\color{black}RVaR $\mathcal{J}_{\text{RVaR}_{0:\alpha}}(\theta)$ and MVaR} curves {\color{black}demonstrate a pronounced} advantage for RiskPO. Since these risk-sensitive measures emphasize the lower tail of the reward distribution, higher values indicate stronger performance on the more challenging problems. Consistently, RiskPO maintains substantially higher entropy throughout training, whereas GRPO’s entropy collapses early on, curtailing exploration and limiting its ability to tackle difficult instances.

\subsection{Ablation Study}

We conduct the ablation study on the easy-level mathematics reasoning tasks. We start our analysis with the contrasting version:risk-seeking. As indicated by Section \ref{sec entropy}, focusing on the upper tail of the reward distribution will catalyse the entropy collapse. Similar to the MVaR objective, we use the counterpart risk-seeking objective, {\color{black}$(1+\omega) (1-\beta) \mathcal{J}_{\text{RVaR}_{\beta:1}}(\theta) + (\beta-\alpha) \mathcal{J}_{\text{RVaR}_{\alpha:\beta}}(\theta)$}, to train the model and keep other parameters the same as the risk-averse version. The training curves are shown in Figure \ref{fig:risk_ablat}. The entropy of the risk-seeking version decreases sharply as the training proceeds, whereas the entropy of the risk-averse version decreases at the beginning, then remains stable around $0.2$. In the Pass@1 curve on MATH, the risk-averse version exhibits a clear advantage over the risk-seeking. Before $50$ steps, the risk-seeking has a better Pass@1 value. However, after the $50$th step, the risk-seeking struggles to keep improving because it fails to optimize on those difficult problems (from $52\%$ to $54\%$). The risk-averse version continues to improve on the Pass@1 (from $52\%$ to $56\%$), showing $1.5$ times improvement. This observation further justifies our theoretical results, which suggest the risk-averse objective is better than both the mean and risk-seeking objectives.

\begin{figure}[h]
    \vspace{-0.3cm}
    \centering
    \begin{subfigure}[t]{0.245\linewidth}
        \centering
        \includegraphics[trim=0cm 0.5cm 0cm 0cm, width=\linewidth]{./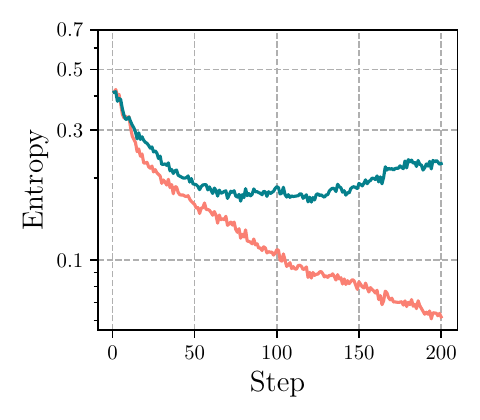}
        \caption{Entropy.}
    \end{subfigure}
    \begin{subfigure}[t]{0.245\linewidth}
        \centering
        \includegraphics[trim=0cm 0.5cm 0cm 0cm, width=\linewidth]{./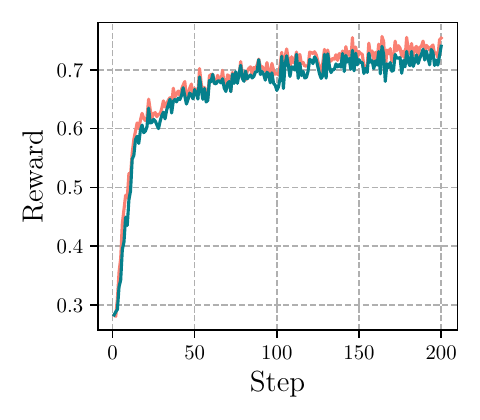}
        \caption{Mean reward.}
    \end{subfigure}
    \begin{subfigure}[t]{0.245\linewidth}
        \centering
        \includegraphics[trim=0cm 0.5cm 0cm 0cm, width=\linewidth]{./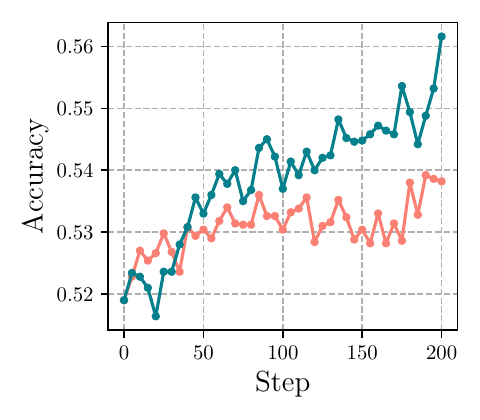}
        \caption{Pass@1 on MATH.}
    \end{subfigure}
    \begin{subfigure}[t]{0.245\linewidth}
        \centering
        \includegraphics[trim=0cm 0.5cm 0cm 0cm, width=\linewidth]{./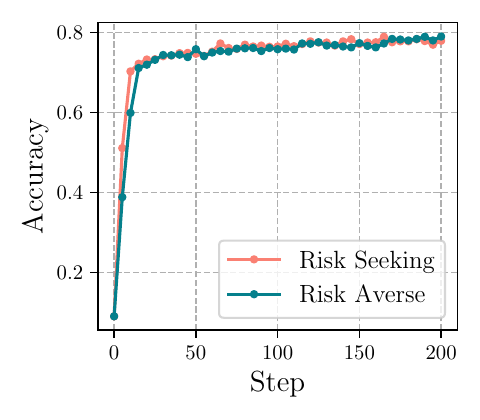}
        \caption{Pass@1 on GSM8K.}
    \end{subfigure}
    \vspace{-0.2cm}
    \caption{Ablation of risk profiles for training.}
    \vspace{-0.3cm}
    \label{fig:risk_ablat}
\end{figure}

\begin{table}[h]
\vspace{-0.1cm}
\centering
\caption{Ablation of different quantile levels $(\alpha,\beta)$ on easy-level mathematical reasoning.}
\label{tab:quantile-ablation}
\vspace{-0.2cm}
\renewcommand{\arraystretch}{1.}
\begin{adjustbox}{max width=\linewidth}
\small
\begin{tabular}{l|c|ccc|ccc}
\toprule
\textbf{Levels} & \textbf{(0.2, 0.8)} & (\textbf{0.1}, 0.8) & (\textbf{0.3}, 0.8) & (\textbf{0.4}, 0.8) & (0.2, \textbf{0.6}) & (0.2, \textbf{0.7}) & (0.2, \textbf{0.9}) \\
\midrule
\textbf{MATH} & \textbf{56.2} & 55.1 & 55.8 & 55.6 & 55.5 & 56.0 & 55.0  \\
\textbf{GSM8K} & \textbf{80.3} & 78.7 & 79.2 & 79.6 & 79.0 & 79.5 & 78.9  \\
\textbf{Avg.} & \textbf{68.25} & 66.90 & 67.50 & 67.60 & 67.25 & 67.75 & 66.95 \\
\bottomrule
\end{tabular}
\end{adjustbox}
\vspace{-0.2cm}
\end{table}

We further conduct a systematic investigation of the parameterization of the risk-averse algorithm, focusing on the quantile level $(\alpha, \beta)$. In the main experiments, we adopt $(0.2, 0.8)$ as the default configuration and independently perturb $\alpha$ and $\beta$ to validate this choice. The results of the quantile-level ablation are summarized in Table \ref{tab:quantile-ablation}. Deviations from the configuration $(0.2, 0.8)$ consistently lead to a deterioration in performance. In particular, the configurations $(0.1, 0.8)$ and $(0.2, 0.9)$ exhibit more degradation. Reducing $\alpha$ to $0.1$ implies a diminished emphasis on the lower tail, whereas increasing $\beta$ to $0.9$ intensifies the emphasis on the upper tail. These observations underscore the importance of maintaining a risk-averse objective.

\begin{table}[h]
\vspace{-0.2cm}
\centering
\caption{Ablation of different bundle size $B$ on easy-level mathematical reasoning.}
\label{tab:bundle-ablation}
\vspace{-0.2cm}
\renewcommand{\arraystretch}{1.}
\begin{adjustbox}{max width=\linewidth}
\small
\begin{tabular}{l|cccccccccc}
\toprule
\textbf{Bundle size} & 1 & 2 & 3 & 4 & \textbf{5} & 6 & 7 & 8 & 9 & 10 \\
\midrule
\textbf{MATH}  & 53.6 & 55.4 & 55.8 & 56.0 & \textbf{56.2} & 55.8 & 55.8 & 55.2 & 54.9 & 54.7 \\
\textbf{GSM8K} & 78.0 & 79.0 & 79.5 & 80.1 & \textbf{80.3} & 79.9 & 79.4 & 78.8 & 78.6 & 78.6 \\
\textbf{Avg.}  & 65.80 & 67.20 & 67.65 & 68.05 & \textbf{68.25} & 67.85 & 67.6 & 67.00 & 66.75 & 66.65 \\
\bottomrule
\end{tabular}
\end{adjustbox}
\end{table}

We also report the ablation results on the bundle size in Table~\ref{tab:bundle-ablation}. The best performance is achieved at a bundle size of $B=5$. This result can be intuitively explained by the trade-off in gradient estimation. Since all instances in a bundle share the same advantage estimation, using an overly large bundle dilutes the gradient signal, as too many instances rely on a single advantage value. Conversely, when the bundle size is too small, quantile tracking becomes unstable, which also weakens the gradient signal. The results in Table~\ref{tab:bundle-ablation} corroborate this intuition: at $B=2$ and $B=10$, the average performance drops by $1.05\%$ and $1.6\%$, respectively. Setting $B=1$ (no bundling) leads to the most severe degradation, with a $2.45\%$ drop. These findings highlight the necessity of bundling for RiskPO and suggest that bundle size should be carefully tuned, as both overly large and overly small values harm performance.

\section{Conclusions}

In this paper, we introduced RiskPO, a distributional alternative to mean-based objectives for reinforcement learning with verifiable reward. By leveraging MVaR and bundling multiple questions into informative training units, RiskPO effectively mitigates entropy collapse and strengthens exploration. Our theoretical analysis establishes that risk-averse updates weaken the coupling between policy log-probabilities and advantages, thereby preventing premature overconfidence. Empirically, RiskPO achieves consistent and significant improvements across a wide range of mathematical reasoning, code generation, and multi-modal benchmarks, outperforming GRPO and its strongest variants. These findings highlight that risk-averse objectives not only improve sample efficiency but also expand the reasoning frontier of large language models. 

\newpage
\bibliography{iclr2026_conference}
\bibliographystyle{iclr2026_conference}

\newpage
\appendix

\section{Theoretical Details}\label{Theoretical Details}
\subsection{Proof of Theorem \ref{theo:rvar_grad}}
\begin{proof}
Recall the definition of the RVaR functional:
$$\mathcal{J}_{\text{RVaR}_{\alpha:\beta}}(\theta)=\mathbb{E}\big[R(y)|R(y)\in[F^{-1}_\theta(\alpha), F^{-1}_\theta(\beta)] \big]=\frac{1}{\beta-\alpha}\int_{F^{-1}_\theta(\alpha)}^{F^{-1}_\theta(\beta)} r f_\theta(r) dr.$$
To compute the RVaR gradient, we apply Leibniz's rule for differentiation, yielding
$$\nabla_\theta \mathcal{J}_{\text{RVaR}_{\alpha:\beta}}(\theta)=\frac{1}{\beta-\alpha}\bigg(\int_{F^{-1}_\theta(\alpha)}^{F^{-1}_\theta(\beta)} r \nabla_\theta f_\theta(r) dr  + F^{-1}_\theta(z) f_\theta(F^{-1}_\theta(z)) \nabla_\theta F^{-1}_\theta(z)\bigg|_\alpha^\beta\bigg).$$
Note that, by the implicit function theorem, the quantile gradient can be expressed as \citep[see, e.g.,][]{fu2009conditional} $\nabla_\theta F^{-1}_\theta(z)=-\nabla_\theta F_\theta(F^{-1}_{\bar\theta}(z))\big|_{\bar\theta=\theta} / f_\theta(F^{-1}_\theta(z))$. Substituting this identity into our previous expression, we can obtain
$$\nabla_\theta \mathcal{J}_{\text{RVaR}_{\alpha:\beta}}(\theta)=\frac{1}{\beta-\alpha}\bigg(\int_{F^{-1}_\theta(\alpha)}^{F^{-1}_\theta(\beta)} r \nabla_\theta f_\theta(r) dr - F^{-1}_\theta(z) \nabla_\theta F_\theta(F^{-1}_{\bar\theta}(z))\big|_{\bar\theta=\theta}\bigg|_\alpha^\beta\bigg).$$
By the definition of CDF, we have 
$\nabla_\theta F_\theta(r)=\nabla_\theta\mathbb{E}\big[\mathbf{1}_{\{R(y)\leq r\}}\big]=\mathbb{E}\big[\mathbf{1}_{\{R(y)\leq r\}}\nabla_\theta\ln f_\theta(R(y))\big]$.
Thus, with the score-function method, we rewrite the RVaR gradient in expectation form as
$$\nabla_\theta\mathcal{J}_{\text{RVaR}_{\alpha:\beta}}(\theta)=\mathbb{E}\bigg[\bigg(R(y)\mathbf{1}_{\{R(y)\in[F^{-1}_\theta(\alpha),F^{-1}_\theta(\beta)]\}} -F^{-1}_\theta(z) \mathbf{1}_{\{R(y)\leq F^{-1}_\theta(z)\}}\bigg|_\alpha^\beta\bigg)\frac{\nabla_\theta\ln f_\theta(R(y))}{\beta-\alpha}\bigg].$$
Finally, since the distribution of $R(y)$ is induced by the LLM $\pi_\theta(\cdot|\cdot)$, we can apply the score-function transformation to yield the final expression
$$\nabla_\theta \mathcal{J}_{\text{RVaR}_{\alpha:\beta}}(\theta)=\frac{1}{\beta-\alpha} \mathbb{E}\big[g\big( R(y), F^{-1}_\theta(\alpha), F^{-1}_\theta(\beta)\big) \nabla_{\theta}\ln \pi_\theta(y\vert x)\big],$$
which completes the proof.
\end{proof}

\subsection{Proof of Proposition \ref{prop:seq_entropy_change}}

\begin{proof}
With a Lipschitz-continuous entropy gradient and a bounded Hessian, the first-order Taylor expansion yields
$\mathcal H(\pi_{\theta_{k+1}}| x)=\mathcal H(\pi_{\theta_k}| x)+\langle \nabla_\theta \mathcal H(\pi_{\theta_k}| x),\,\Delta_k \rangle+O(\Vert \Delta_{k}\Vert^2)$. The entropy gradient can be written as 
\begin{align*}
    \nabla_\theta \mathcal H(\pi_\theta\!\mid x)&=\nabla_{\theta}(-\mathbb{E}_{y\sim \pi_{\theta}(\cdot|x)}[\log\pi_{\theta}(y\vert x)])\\
    &=-\mathbb{E}_{y\sim \pi_{\theta}(\cdot|x)}[\nabla_{\theta}\log\pi_{\theta}(y\vert x)+\log\pi_{\theta}(y\vert x)\nabla_{\theta}\log\pi_{\theta}(y\vert x)]\\
    &=-\mathbb{E}_{y\sim \pi_{\theta}(\cdot|x)}[\log\pi_{\theta}(y\vert x)\nabla_{\theta}\log\pi_{\theta}(y\vert x)],
\end{align*}
where the second equality comes from the score-function method, and the last equality is due to the identity $\mathbb{E}_{y\sim \pi_{\theta}(y|x)}[\nabla_{\theta}\log\pi_{\theta}(y\vert x)]=0$.
Note that \(
\frac{\partial }{\partial \theta_{x, y^{\prime}}} \log \pi_\theta(y | x)
= \mathbf{1}_{\{y=y^{\prime}\}}-\pi_\theta(y^{\prime}| x)
\). Taking the inner product with $\Delta$ gives
\begin{align}\label{eq:linear_entropy}
    \langle \nabla_\theta \mathcal{H}(\pi_\theta &\vert x),\Delta  \rangle =-\langle \mathbb{E}_{y\sim \pi_{\theta}(\cdot|x)}\big[\log\pi_{\theta}(y\vert x)\nabla_{\theta}\log\pi_{\theta}(y\vert x)\big],\Delta  \rangle \nonumber \\
    &=- \mathbb{E}_{y\sim \pi_{\theta}(\cdot|x)}\big[\log\pi_{\theta}(y\vert x) \langle \nabla_{\theta}\log\pi_{\theta}(y\vert x),\Delta  \rangle\big] \nonumber \\
    &=- \mathbb{E}_{y\sim \pi_{\theta}(\cdot|x)}\bigg[\log\pi_{\theta}(y\vert x) \sum_{y'\in\mathcal{Y}} \frac{\partial \log\pi_{\theta}(y\vert x)}{\partial \theta_{x,y'}}\Delta _{x,y'}\bigg] \nonumber \\
    &=- \mathbb{E}_{y\sim \pi_{\theta}(\cdot|x)}\bigg[\log\pi_{\theta}(y\vert x) \sum_{y'\in\mathcal{Y}} (\mathbf{1}_{\{y=y^{\prime}\}}-\pi_\theta(y^{\prime}| x))\Delta _{x,y'}\bigg] \nonumber \\
    &=- \mathbb{E}_{y\sim \pi_{\theta}(\cdot|x)}\big[\log\pi_{\theta}(y\vert x) \Delta _{x,y} \big] - \mathbb{E}_{y\sim \pi_{\theta}(\cdot|x)}\big[\log\pi_{\theta}(y\vert x)  \big] \sum_{y'\in\mathcal{Y}} \pi_\theta(y^{\prime}| x)\Delta _{x,y'} \nonumber \\
    &=-\operatorname{Cov}_{y\sim \pi_{\theta}(\cdot|x)}\!\big(\log\pi_\theta(y| x),\,\Delta _{x,y}\big) 
\end{align}

In a tabular softmax policy, a natural policy gradient update step admits
$\Delta_{x,y} = \eta\,A_\theta(x,y)$ \citep{agarwal2021theory}.
Plugging this into (\ref{eq:linear_entropy}) yields the equality (\ref{eq:cov_NPG}), which completes the proof.

\end{proof}


\subsection{Proof of Theorem \ref{theo:advantage_correlation}}
Before proving Theorem~\ref{theo:advantage_correlation}, we first prepare the following lemma, which provides a convenient representation of covariance.

\begin{lemma}\label{lem:cov-layer-cake}
Let $X,Y$ be real-valued random variables satisfying $\mathbb E[|XY|]<\infty$. Then
\begin{equation}\label{eq:layer-cake-cov}
\operatorname{Cov}(X,Y)
= \int_{-\infty}^{\infty}\operatorname{Cov}\!\big(\mathbf 1_{\{X>t\}},\,Y\big)\,dt.
\end{equation}
\end{lemma}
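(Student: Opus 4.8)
The plan is to reduce the identity to the elementary \emph{layer-cake} representation of a real number, namely that for every real $x$ one has $x=\int_{-\infty}^{\infty}\big(\mathbf 1_{\{x>t\}}-\mathbf 1_{\{0>t\}}\big)\,dt$. This is checked by cases: the integrand is supported on the interval between $0$ and $x$, where it equals $\operatorname{sign}(x)$, so integrating returns $x$ (for $x\ge 0$ the integrand is $1$ on $[0,x)$ and $0$ elsewhere; for $x<0$ it is $-1$ on $[x,0)$ and $0$ elsewhere). Applying this pointwise to the random variable $X$ gives $X=\int_{-\infty}^{\infty}\big(\mathbf 1_{\{X>t\}}-\mathbf 1_{\{0>t\}}\big)\,dt$ almost surely.

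Next I would write the covariance in the one-sided centred form $\operatorname{Cov}(X,Y)=\mathbb E\big[X\,(Y-\mathbb E Y)\big]$ and substitute this representation of $X$:
\[
\operatorname{Cov}(X,Y)=\mathbb E\!\left[\Big(\int_{-\infty}^{\infty}\big(\mathbf 1_{\{X>t\}}-\mathbf 1_{\{0>t\}}\big)\,dt\Big)(Y-\mathbb E Y)\right].
\]
The key step is to exchange the expectation with the $t$-integral via Fubini. Once this is granted, the deterministic term drops out, since $\mathbf 1_{\{0>t\}}$ does not depend on the sample point and $\mathbb E[Y-\mathbb E Y]=0$, leaving
\[
\operatorname{Cov}(X,Y)=\int_{-\infty}^{\infty}\mathbb E\big[\mathbf 1_{\{X>t\}}(Y-\mathbb E Y)\big]\,dt=\int_{-\infty}^{\infty}\operatorname{Cov}\big(\mathbf 1_{\{X>t\}},Y\big)\,dt,
\]
where the last equality merely recognises $\mathbb E[\mathbf 1_{\{X>t\}}(Y-\mathbb E Y)]=\mathbb E[\mathbf 1_{\{X>t\}}Y]-\mathbb E[\mathbf 1_{\{X>t\}}]\mathbb E[Y]$ as a covariance.

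The main obstacle, and the only place the hypothesis $\mathbb E[|XY|]<\infty$ is used, is justifying this interchange. For that I would verify absolute integrability of the joint integrand by Tonelli: since $\big|\mathbf 1_{\{X>t\}}-\mathbf 1_{\{0>t\}}\big|$ equals $1$ exactly for $t$ between $0$ and $X$, one has $\int_{-\infty}^{\infty}\big|\mathbf 1_{\{X>t\}}-\mathbf 1_{\{0>t\}}\big|\,dt=|X|$, whence
\[
\int_{-\infty}^{\infty}\mathbb E\big[\big|\mathbf 1_{\{X>t\}}-\mathbf 1_{\{0>t\}}\big|\,|Y-\mathbb E Y|\big]\,dt=\mathbb E\big[|X|\,|Y-\mathbb E Y|\big]\le \mathbb E|XY|+|\mathbb E Y|\,\mathbb E|X|<\infty,
\]
using the integrability of the marginals that is implicit in $\operatorname{Cov}(X,Y)$ being well defined. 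This finiteness licenses Fubini and completes the argument, the remaining manipulations being the bookkeeping indicated above.
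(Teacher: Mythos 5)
Your proof is correct and follows essentially the same route as the paper's: a layer-cake representation of $X$ followed by a Fubini--Tonelli interchange justified by $\mathbb E[|XY|]<\infty$. The only cosmetic difference is that you use the anchored form $X=\int_{-\infty}^{\infty}\big(\mathbf 1_{\{X>t\}}-\mathbf 1_{\{0>t\}}\big)\,dt$ and center $Y$ so the deterministic term vanishes, whereas the paper splits into positive and negative parts over $[0,\infty)$ and merges the two integrals afterwards via $-\operatorname{Cov}(1-Z,Y)=\operatorname{Cov}(Z,Y)$; both arguments rely on the same implicit integrability of the marginals needed for the covariance to be well defined.
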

\begin{proof}
We start from the layer–cake representation
$X=\int_{0}^{\infty}\!\big(\mathbf 1_{\{X>t\}}-\mathbf 1_{\{-X>t\}}\big)\,dt$,
which holds for any real-valued $X$. Multiplying by $Y$ and taking expectations, we may apply the Tonelli–Fubini theorem under the integrability condition $\mathbb{E}[|XY|]<\infty$, which yields
\[
\mathbb E[XY]
= \int_{0}^{\infty}\!\Big(\mathbb E\big[Y\,\mathbf 1_{\{X>t\}}\big]
     - \mathbb E\big[Y\,\mathbf 1_{\{-X>t\}}\big]\Big)\,dt.
\]
Applying the same transformation to $\mathbb E[X]\mathbb E[Y]$ and subtracting, we obtain
\[
\operatorname{Cov}(X,Y)
= \int_{0}^{\infty}\!\Big(\operatorname{Cov}\big(\mathbf 1_{\{X>t\}},Y\big)
- \operatorname{Cov}\big(\mathbf 1_{\{-X>t\}},Y\big)\Big)\,dt.
\]
Changing the integral variable in the second term and using the identity $-\operatorname{Cov}(1-Z,Y)=\operatorname{Cov}(Z,Y)$, we merge the two integrals and obtain the equality (\ref{eq:layer-cake-cov}). The distinction between strict and non-strict inequalities only affects a countable set of $t$ values and does not change the integral under the Lebesgue measure, which completes the proof.
\end{proof}

Next, we present the proof of Theorem \ref{theo:advantage_correlation}. For notational clarity, we focus on a single representative output $y$ from the model $\pi_\theta(\cdot|x)$ rather than a bundle. The advantage values for the MVaR- and mean-based objectives are given by $A_{\mathrm{MVaR}_{\alpha:\beta}^\omega} = -(1+\omega)(F_{\theta}^{-1}(\alpha)-R(y))^++g(R(y),F_{\theta}^{-1}(\alpha),F_{\theta}^{-1}(\beta))$ and $A_{\mathrm{Mean}} = R(y) - \mathbb{E}[R(y)]$.

\begin{proof}[Proof of Theorem \ref{theo:advantage_correlation}.]

Recall that the positive part function can be expressed via the layer–cake representation:
\(
(z-a)^+ = \int_a^{+\infty} \mathbf 1_{\{z>t\}}\,dt.
\)
With Lemma \ref{lem:cov-layer-cake}, we can derive the covariances as below
\begin{equation}\label{eq:rvar-cov}
\begin{aligned}
\operatorname{Cov}(A_{\mathrm{MVaR}_{\alpha:\beta}^\omega},\mathrm{SF})
&= \bigg[(1+\omega)\!\!\int_{-\infty}^{F^{-1}(\alpha)}
  + \int_{F^{-1}(\alpha)}^{F^{-1}(\beta)} \bigg]
    \operatorname{Cov}\!\big(\mathbf 1_{\{R(y)>t\}},\mathrm{SF}\big)\,dt,
\end{aligned}\end{equation}
where, for notational convenience, we denote $\mathrm{SF} := \log \pi_\theta(y|x)$. 
Define $k(t):=\Cov(\mathbf 1_{\{R(y)>t\}},\mathrm{SF})$ and recall that the density of $R(y)$ is $f_\theta$. Then, we compute the derivative of $k(t)$ as follows:
\begin{align*}
    k'(t)&=\frac{d(\E[\mathbf{1}_{\{R(y)>t\}}\mathrm{SF}])}{dt}-\frac{d(\Pr(R(y)>t)\E[\mathrm{SF}])}{dt}\\
    &=\frac{d}{dt}\E\big[\E[\mathbf{1}_{\{R(y)>t\}}\mathrm{SF}\vert R(y)]\big]-\E[\mathrm{SF}]\frac{d}{dt}\int^{\infty}_{t}f_\theta(r)dr \\
    &=\frac{d}{dt}\E\big[\mathbf{1}_{\{R(y)>t\}}\E[\mathrm{SF}\vert R(y)]\big]-\E[\mathrm{SF}]\frac{d}{dt}\int^{\infty}_{t}f_\theta(r)dr \\
    &=\frac{d}{dt}\int^{\infty}_{t}\E[\mathrm{SF}\vert R(y)=r]f_\theta(r)dr-\E[\mathrm{SF}]\frac{d}{dt}\int^{\infty}_{t}f_\theta(r)dr\\
    &=-\psi(t)f_\theta(t)-(-\E[\mathrm{SF}]f_\theta(t))\\
    &=-\,f_\theta(t)\big(\psi(t)-\E[\mathrm{SF}]\big).
\end{align*}
Under Assumption \ref{tail_mono_assump}, $\psi(t)\ge \E[\mathrm{SF}]$ for $t\ge F_\theta^{-1}(\beta)$ and $t\le F_\theta^{-1}(\alpha)$. Consequently, $k'(t)\le 0 $ for $t\ge F_\theta^{-1}(\beta)$ and $t\le F_\theta^{-1}(\alpha)$, which implies that $k(t)$ is non-increasing in both the upper and lower tails. Moreover, since $\E[|\mathrm{SF}|]<\infty$, the dominated convergence theorem implies that 
\begin{align*} \lim_{t\rightarrow\infty}k(t)&=\lim_{t\rightarrow\infty}\Cov(\mathbf 1_{\{R(y)>t\}},\mathrm{SF}) \\
& = \lim_{t\rightarrow\infty}\E[\mathbf 1_{\{R(y)>t\}}\mathrm{SF}]-\lim_{t\rightarrow\infty}\Pr(R(y)>t)\E[\mathrm{SF}] \\
& = \E[\mathbf \lim_{t\rightarrow\infty}1_{\{R(y)>t\}}\mathrm{SF}] - 0  = 0.
\end{align*}
Analogously, we can show that $\lim_{t\rightarrow-\infty}k(t) = \E[\mathrm{SF}]-\E[\mathrm{SF}] = 0$. By the monotonicity in the tails, we obtain $k(t) \ge 0$ for $t\ge F_\theta^{-1}(\beta)$ and $k(t) \le 0$ for $t\le F_\theta^{-1}(\alpha)$. 
Therefore, noting that $k(t)=\operatorname{Cov}\big(\mathbf 1_{\{R(y)>t\}},\mathrm{SF}\big)$ preserves its sign on both tails, we can further obtain
\begin{align*}
    \operatorname{Cov}(A_{\mathrm{MVaR}_{\alpha:\beta}^\omega},\mathrm{SF}) \leq \int_{\mathbb{R}}
    \operatorname{Cov}\!\big(\mathbf 1_{\{R(y)>t\}},\mathrm{SF}\big)\,dt = \operatorname{Cov}(A_{\mathrm{Mean}},\mathrm{SF}),
\end{align*}
which completes the proof.
\end{proof}

\subsection{Supplementary Theorem of Section \ref{sec entropy}}\label{sec:sup th}

With the different treatment of the tail in the reward distribution, we obtain the following covariance result between the resulting advantage value and the output log-probability.

\begin{theorem}\label{sup th}
If Assumption \ref{tail_mono_assump} holds with $\mathbb{E}[|\mathrm{SF}|]<\infty$, and $g_1,g_2$ are nondecreasing and differentiable with $g'_1(t)\geq g'_2(t)$ on $[F^{-1}_\theta(\beta),\infty)$,
{{$g_1(t)=g_2(t)$ on $(-\infty, F^{-1}_\theta(\beta)]$}} , then we have $$\Cov(\mathrm{SF},g_2(R(y)))\ \ge\ \Cov(\mathrm{SF},g_1(R(y))).$$
\end{theorem}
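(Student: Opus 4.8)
The target is the single inequality $\Cov_{y\sim\pi_\theta(\cdot|x)}(\mathrm{SF}, g_2(R(y))) \ge \Cov_{y\sim\pi_\theta(\cdot|x)}(\mathrm{SF}, g_1(R(y)))$, where $\mathrm{SF}:=\log\pi_\theta(y|x)$. The plan is to reuse the machinery built for Theorem \ref{theo:advantage_correlation}: the kernel $k(t):=\Cov(\mathbf 1_{\{R(y)>t\}},\mathrm{SF})$ together with its derivative $k'(t)=-f_\theta(t)\big(\psi(t)-\E[\mathrm{SF}]\big)$, and then to express each covariance as a single integral of $k$ weighted by the slope of the transformation.

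First I would establish a layer-cake identity for monotone transforms. For any nondecreasing differentiable $g$ with $\E[|g(R(y))\,\mathrm{SF}|]<\infty$, fixing a reference point $c$ and writing $g(R)-g(c)=\int_{\mathbb R} g'(t)\big(\mathbf 1_{\{R>t\}}-\mathbf 1_{\{c>t\}}\big)\,dt$, then taking covariance with $\mathrm{SF}$, the constant $g(c)$ and the $\mathbf 1_{\{c>t\}}$ term drop out, and interchanging the outer integral with the expectation by Tonelli--Fubini (justified exactly as in Lemma \ref{lem:cov-layer-cake}) gives $\Cov(\mathrm{SF}, g(R))=\int_{\mathbb R} g'(t)\,k(t)\,dt$. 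The integrability needed is mild here, since the advantage transforms $g_1,g_2$ are bounded and $\E[|\mathrm{SF}|]<\infty$.

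Applying this representation to both $g_1$ and $g_2$ and subtracting, the difference collapses to $\Cov(\mathrm{SF},g_2(R))-\Cov(\mathrm{SF},g_1(R))=\int_{\mathbb R}\big(g_2'(t)-g_1'(t)\big)\,k(t)\,dt$. Since $g_1\equiv g_2$ away from the tail where they are permitted to differ, the integrand is supported only on that tail, and there the slope gap $g_2'-g_1'$ is sign-definite by the slope hypothesis. To conclude I would pin the sign of $k$ on this same tail using the $k'$ formula and Assumption \ref{tail_mono_assump}: the prescribed monotonicity of $\psi$ together with the boundary inequalities $\psi(F_\theta^{-1}(\alpha)),\psi(F_\theta^{-1}(\beta))\ge\E[\mathrm{SF}]$ forces $k$ to be monotone on each tail, while dominated convergence gives $k(\pm\infty)=0$; these two facts pin the sign of $k$ on the relevant tail. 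Pairing that sign against the sign of the slope gap makes the integrand nonnegative, yielding $\Cov(\mathrm{SF},g_2(R))\ge\Cov(\mathrm{SF},g_1(R))$.

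The main obstacle is exactly this final sign-pairing: one must verify that, on the sub-interval where $g_1$ and $g_2$ actually differ, the sign of the kernel $k$ and the sign of the slope gap $g_2'-g_1'$ coincide, so that their product integrates to a nonnegative quantity. This is the only nonmechanical step and is precisely where Assumption \ref{tail_mono_assump} enters; the layer-cake interchange and the vanishing of $k$ at $\pm\infty$ are routine given $\E[|\mathrm{SF}|]<\infty$. A secondary technical point is to make the representation $\Cov(\mathrm{SF},g(R))=\int g'(t)k(t)\,dt$ rigorous for piecewise-linear advantage transforms, where $g'$ exists only almost everywhere; this is handled by approximating $g$ from below by smooth nondecreasing functions and passing to the limit.
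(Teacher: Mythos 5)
Your route is the paper's own, step for step: Lemma~\ref{lem:cov-layer-cake} plus a change of variables gives $\Cov(\mathrm{SF},g_i(R(y)))=\int_{\mathbb{R}} k(t)\,g_i'(t)\,dt$ with $k(t)=\Cov(\mathbf{1}_{\{R(y)>t\}},\mathrm{SF})$; the difference of the two covariances localizes to $[F_\theta^{-1}(\beta),\infty)$ because $g_1=g_2$ below that point; and the sign of $k$ on the upper tail is imported from the proof of Theorem~\ref{theo:advantage_correlation} via $k'(t)=-f_\theta(t)\big(\psi(t)-\E[\mathrm{SF}]\big)$, Assumption~\ref{tail_mono_assump}, and $k(t)\to 0$ as $t\to\infty$ by dominated convergence. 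Your reference-point trick and the smoothing remark for piecewise-linear transforms are harmless extras; the paper's Stieltjes form $\int \Cov(\mathbf{1}_{\{R(y)>u\}},\mathrm{SF})\,dg_i(u)$ already accommodates kinks.

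However, the one step you yourself single out as nonmechanical --- the final sign-pairing --- fails for the statement as printed, and you assert it rather than verify it. On $[F_\theta^{-1}(\beta),\infty)$ the kernel satisfies $k(t)\ge 0$ (it is non-increasing there and vanishes at $+\infty$), while the hypothesis $g_1'\ge g_2'$ makes the slope gap $g_2'(t)-g_1'(t)\le 0$; hence $\int_{F_\theta^{-1}(\beta)}^{\infty}k(t)\big(g_2'(t)-g_1'(t)\big)\,dt\le 0$, which yields $\Cov(\mathrm{SF},g_2(R(y)))\le\Cov(\mathrm{SF},g_1(R(y)))$ --- the \emph{reverse} of the stated conclusion. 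The signs do not coincide; they are opposed. For what it is worth, the paper's own proof commits the identical slip: its last display asserts the integral is $\ge 0$ under exactly these hypotheses, so the theorem as printed carries a sign typo --- either the hypothesis should read $g_2'\ge g_1'$ on the upper tail, or the displayed inequality should be flipped. The flipped version is also the one consistent with the surrounding narrative (flattening the upper tail weakens the advantage--log-probability coupling, hence slows entropy decay), and under either correction your argument, like the paper's, goes through verbatim.
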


\begin{proof}

Note that 
\begin{align*}
    \Cov(\mathrm{SF},g_i(R(y))) &= \int_{-\infty}^{\infty} \Cov(\mathrm{SF},\mathbf 1_{\{g_i(R(y))>t\}})dt = \int_{-\infty}^{\infty} \Cov(\mathrm{SF},\mathbf 1_{\{R(y)>u\}})dg_i(u) \\
    &= \int_{-\infty}^{\infty} \Cov(\mathrm{SF},\mathbf 1_{\{R(y)>t\}})g_i'(t)dt,
\end{align*}
where the first equality is due to Lemma \ref{lem:cov-layer-cake} and the second equality is integration by substitution.
The proof of Theorem \ref{theo:advantage_correlation} implies that under Assumption \ref{tail_mono_assump}, we have $k(t)\ge0$ for $t\ge F_\theta^{-1}(\beta)$, thus
\begin{align*}
    \Cov(\mathrm{SF},g_2(R(y)))-\Cov(\mathrm{SF},g_1(R(y))) = \int_{F_\theta^{-1}(\beta)}^{\infty} k(t)(g_2'(t)-g_1'(t))dt \ge 0,
\end{align*}
which gives the desired result.
\end{proof}
A symmetric conclusion can be derived analogously for the treatment of the other tail.

\newpage
\section{Experimental Setup and Supplementary Results}\label{Experiment Details}

\subsection{Experimental Setup}\label{Training Setting}
\paragraph{Model.} We focus on mathematics reasoning, code generation, and multi-modal reasoning. We use DeepSeek-R1-Distill-Qwen-1.5B \citep{guo2025deepseeknature} as our base model to evaluate different algorithms on hard-level mathematics reasoning and code generation. On easy-level mathematics reasoning, we use Qwen2.5-Math-1.5B-Instruct \citep{yang2024qwenmath} as the base model. On multi-modal reasoning, we use Qwen2.5-VL-Instruct-3B \citep{bai2025qwen2vl} as the base model.

\paragraph{Training.} For the hard-level mathematics reasoning tasks. We use the DAPO-math-17k as the training set. For the easy-level, we use MATH \citep{hendrycksmathlighteval2021} and GSM8K \citep{cobbe2021gsm8k}. For multi-modal reasoning, we use Geometry3K \citep[Geo3K,][]{lu2021intergeo3k}. For code generation, we train the models on Archer-6K \citep{wang2025stabilizingarcher}. We set the clipping threshold $\epsilon=0.2$. KL penalty and entropy regularization are omitted from the loss objective. We use vLLM as the inference backend and FSDP as the training backend. We set the temperature to $0.8$ and $\text{top\_p}$ to $1.0$, and maximum output length as $3072$. We generate $10$ responses for each problem. The batch size is $512$, the mini-batch size is set to $128$. For quantile levels, we set $\alpha$ to $0.2$ and $\beta$ to $0.8$ correspondingly. The bundle size $B$ is set to $5$. The mixing parameter of MVaR is $\omega=0.5$. All training procedures are carried out on a Linux server equipped with 8 NVIDIA H20 GPUs, each providing 96 GB of memory. 

\paragraph{Evaluation.} For hard-level mathematics reasoning, We evaluate on six math reasoning datasets: AIME24 \citep{maa2024aime} and AIME25 \citep{maa2025aime} with $30$ problems from the American Invitational Mathematics Examination, both targeting advanced pre-collegiate reasoning; AMC23 \citep{maa2023amc23} with $83$ problems from the American Mathematics Competitions, testing creative algebraic, geometric, and number-theoretic skills; MATH-500 \citep{lightman2023MATH500} with 500 graduate-level problems from the original MATH dataset covering algebra, geometry, and number theory; Minerva Math \citep{lewkowycz2022solvingMinerva} with $272$ undergraduate-level quantitative reasoning problems; and OlympiadBench \citep{he2024olympiadbench} with $675$ Olympiad-style problems. For easy-level math tasks and the multi-modal task, we follow the train-test split in the original datasets. For code generations, we evaluate trained models on LiveCodeBench v5 \cite[LCB,][]{jain2024livecodebench}.

\subsection{Ablation on the Mixing Parameter}

We fix the quantile level, and perturb the $\omega$ to investigate how different attention on the lower tail influences the performance. The ablation of $\omega$ is shown in Table \ref{tab:MVaR_omega}. Setting $\omega=0.0$ would reduce the MVaR objective, $\mathcal{J}_{\text{MVaR}^{\omega}_{\alpha:\beta}}(\theta)$, to $\mathcal{J}_{\text{RVaR}_{0:\beta}}(\theta)$, which does not have extra attention on the lower tail even though it is still a risk-averse objective. The variant with $\omega=0.0$ has the largest performance decrease, indicating the significance of extra attention on the lower tail. When setting $\omega=1.0$, the variant also suffers from a mild performance drop. Overall, the phenomena suggest that the level of risk aversion needs to be properly tuned; both an indifferent level and excessive focus would lead to undesirable performance.

\begin{table}[h]
\centering
\caption{Ablation of the mixing parameter.}
\renewcommand{\arraystretch}{1.}
\label{tab:MVaR_omega}
\small
\begin{tabular}{lccccc}
\toprule
\textbf{Settings} & $0.0$ & $0.1$ & $0.5$ & $0.6$ & $1.0$ \\
\midrule
\textbf{MATH} & 54.8 & 55.1 & 56.2 & 56.0 & 55.7 \\
\textbf{GSM8K} & 79.6 & 80.0 & 80.3 & 80.2 & 80.0 \\
\textbf{Avg.} & 67.20 & 67.55 & 68.25 & 68.10 & 67.85 \\
\bottomrule
\end{tabular}
\end{table}

\subsection{Extensive Avg@k and Pass@k Results}

Since both AIME2024 and AIME2025 contain only 30 questions, the Pass@k metric exhibits high variance and fluctuates significantly. To obtain a more stable evaluation, we report the Avg@k results in Figure~\ref{Fig:avg@k_aime_all}. Across both datasets and different $k$ values, RiskPO consistently outperforms GRPO, achieving higher Avg@k scores and demonstrating more stable improvements during training. The advantage of RiskPO is especially pronounced in the later training stages, where it continues to increase while GRPO tends to plateau. These results further confirm the effectiveness of our risk-sensitive optimization in enhancing reasoning performance on small-scale but challenging benchmarks like AIME.

Figure~\ref{Fig:pass@k_mine_olym} reports Pass@k for $k\in\{1,8,16\}$. 
Across both datasets and all $k$, \textsc{RiskPO} consistently outperforms \textsc{GRPO} throughout training. 
The margin is modest but stable at $k=1$ (especially on \textsc{Minerva}, where variance is higher), and becomes clearly larger for $k=8,16$. 
This widening gap at larger $k$ indicates that \textsc{RiskPO} not only improves the best single prediction, but also spreads probability mass over a broader set of valid solution paths, thereby increasing the likelihood that at least one sampled response is correct. 
In effect, the risk-sensitive objective enhances coverage and diversity of reasoning, pushing the success frontier on problems that initially have low correctness probability and yielding larger gains at higher $k$.

\begin{figure}[h]
\centering
\subfloat[Avg@32 on AIME2024.]{
\includegraphics[trim=0cm 0.5cm 0cm 0cm, width=0.3\textwidth]{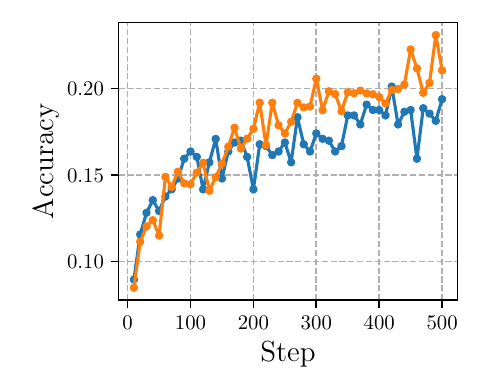}}
\subfloat[Avg@64 on AIME2024.]{
\includegraphics[trim=0cm 0.5cm 0cm 0cm, width=0.3\textwidth]{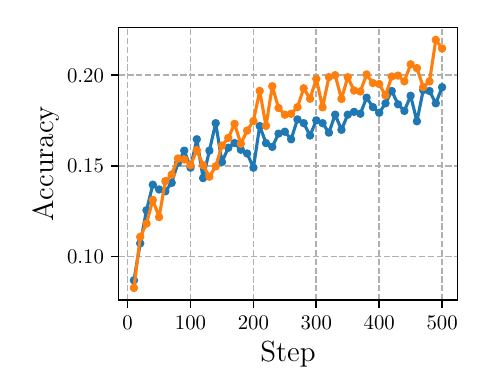}}

\subfloat[Avg@32 on AIME2025.]{
\includegraphics[trim=0cm 0.5cm 0cm 0cm, width=0.3\textwidth]{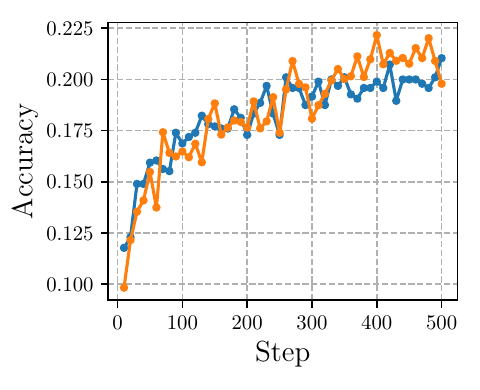}}
\subfloat[Avg@64 on AIME2025.]{
\includegraphics[trim=0cm 0.5cm 0cm 0cm, width=0.3\textwidth]{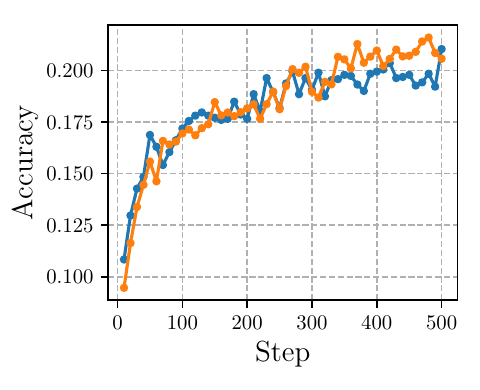}}

\caption{Avg@k learning curves on AIME2024 and AIME2025 datasets.}
\label{Fig:avg@k_aime_all}
\end{figure}

\begin{figure}[h]
\centering
\subfloat[Pass@1 on Minerva.]{
\includegraphics[trim=0cm 0.5cm 0cm 0cm, width=0.3\textwidth]{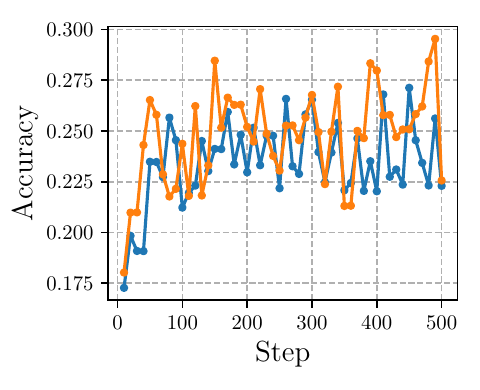}}
\subfloat[Pass@8 on Minerva.]{
\includegraphics[trim=0cm 0.5cm 0cm 0cm, width=0.3\textwidth]{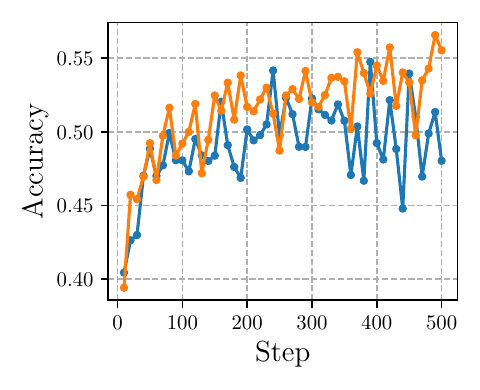}}
\subfloat[Pass@16 on Minerva.]{
\includegraphics[trim=0cm 0.5cm 0cm 0cm, width=0.3\textwidth]{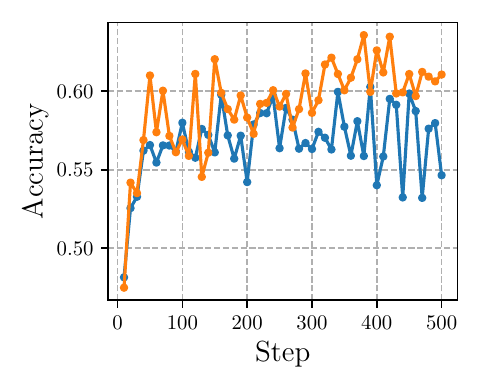}}

\subfloat[Pass@1 on Olympiad.]{
\includegraphics[trim=0cm 0.5cm 0cm 0cm, width=0.3\textwidth]{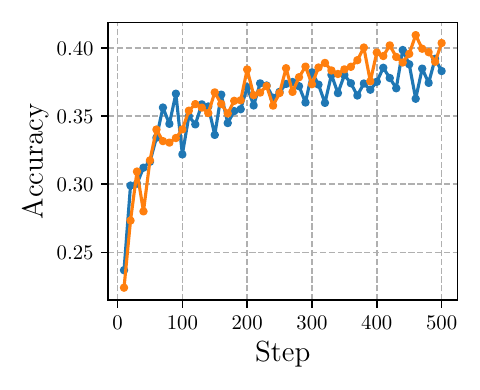}}
\subfloat[Pass@8 on Olympiad.]{
\includegraphics[trim=0cm 0.5cm 0cm 0cm, width=0.3\textwidth]{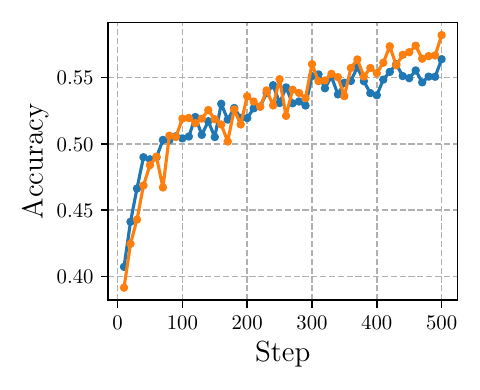}}
\subfloat[Pass@16 on Olympiad.]{
\includegraphics[trim=0cm 0.5cm 0cm 0cm, width=0.3\textwidth]{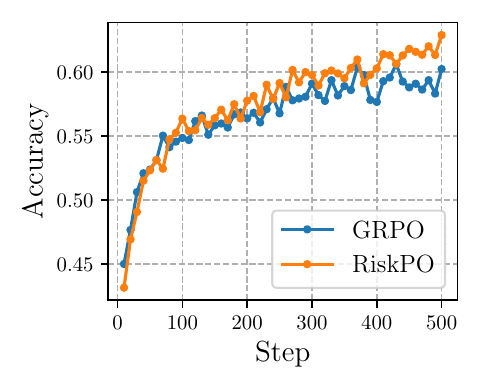}}

\caption{Pass@k learning curves on Minerva and Olympiad datasets.}
\label{Fig:pass@k_mine_olym}
\end{figure}

\subsection{Justification of Assumption \ref{tail_mono_assump}}

Figure~\ref{Fig:logprob_other} presents the output log probability stratified by reward ranges across evaluation datasets. On Minerva and Olympiad datasets, the patterns closely align with Assumption~\ref{tail_mono_assump}: the output log probability is monotone with respect to reward in both the lower- and upper-tail regions, approximately on $(0,0.3)$ and $(0.7,1)$. 
Results on AMC show a similar monotone trend, although fluctuations appear in the mid-reward ranges, suggesting mixed difficulty and solution modes. For MATH, which is comparatively easier, the upper tail exhibits strong monotonicity, while the lower tail is less pronounced—likely due to a scarcity of truly difficult items that would populate that region. 
Importantly, these evaluation sets are not used for model training; the observed regularities therefore provide additional evidence that Assumption~\ref{tail_mono_assump} holds broadly for the pretrained base model across diverse benchmarks.

\begin{figure}[h]
\centering
\subfloat[AMC.]{
\label{abl:1}
\includegraphics[trim=0cm 0.5cm 0cm 0cm, width=0.45\textwidth]{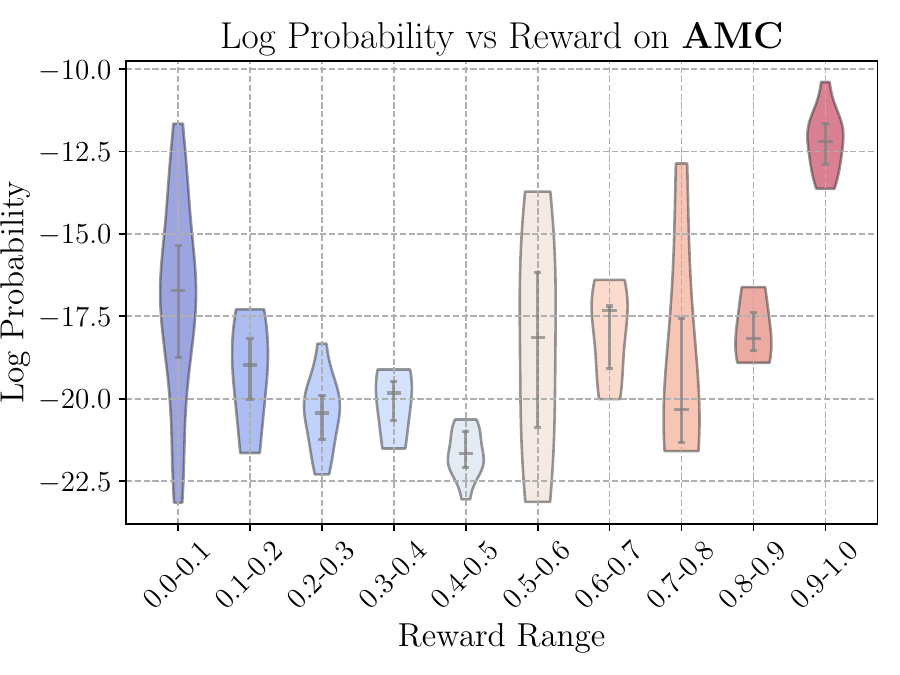}}
\subfloat[MATH.]{
\label{abl:2}
\includegraphics[trim=0cm 0.5cm 0cm 0cm, width=0.45\textwidth]{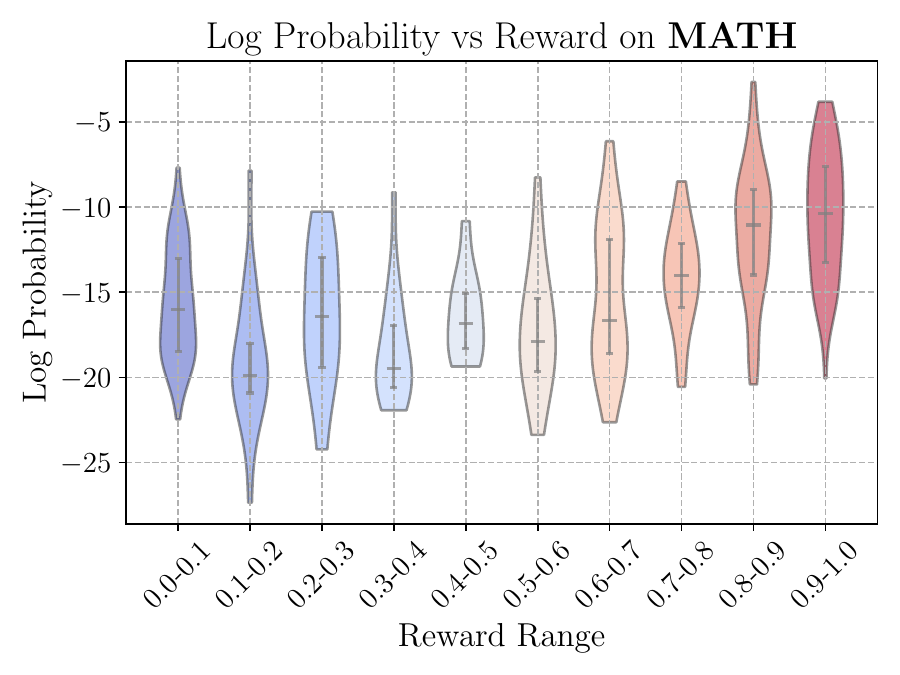}}\\
\subfloat[Olympiad.]{
\label{abl:3}
\includegraphics[trim=0cm 0.5cm 0cm 0cm, width=0.45\textwidth]{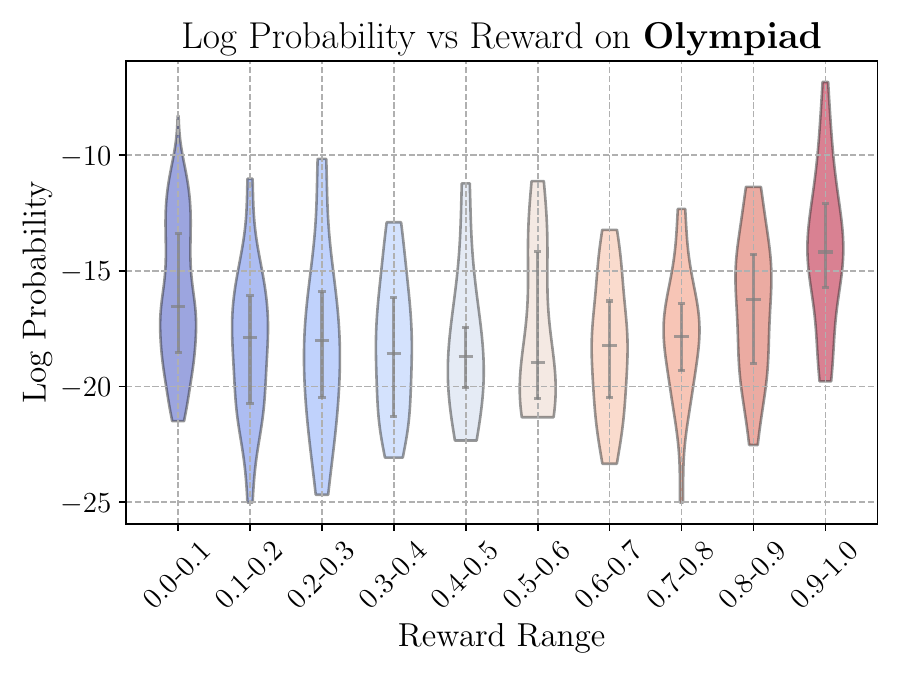}}
\subfloat[Minerva.]{
\label{abl:4}
\includegraphics[trim=0cm 0.5cm 0cm 0cm, width=0.45\textwidth]{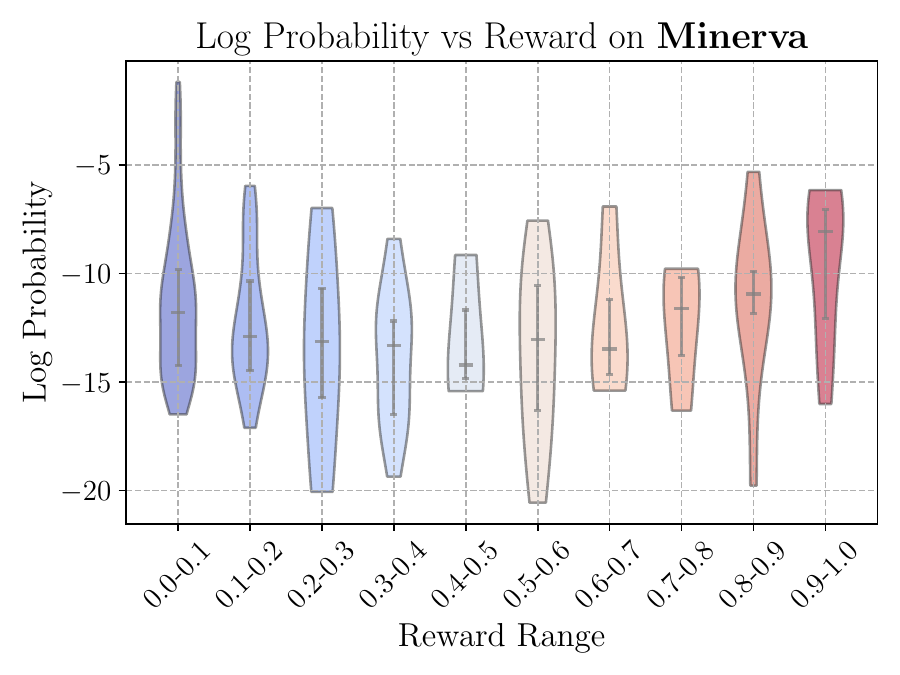}}
\caption{The output log probability on various evaluation datasets.}
\label{Fig:logprob_other}
\end{figure}

\end{document}